\documentclass{article}

\usepackage{times}
\usepackage{graphicx}
\usepackage{subfigure}
\usepackage{framed}

\usepackage{natbib}

\usepackage{algorithm}
\usepackage{algorithmic}

\usepackage{amsmath}
\usepackage{amsthm}
\newtheorem{thm}{Theorem}
\newtheorem{lem}{Lemma}
\allowdisplaybreaks

\usepackage[accepted]{icml2014}

\newlength{\tcwidth}
\setlength{\tcwidth}{70mm}

\icmltitlerunning{Finito: A Faster, Permutable Incremental Gradient Method for Big Data Problems}

\begin{document} 

\setlength{\abovedisplayskip}{2mm}
\setlength{\belowdisplayskip}{2mm}

\twocolumn[
\icmltitle{Finito: A Faster, Permutable Incremental Gradient Method for Big Data Problems}

\icmlauthor{Aaron J. Defazio}{aaron.defazio@anu.edu.au}
\icmlauthor{Tib\'erio S. Caetano}{tiberio.caetano@nicta.com.au}
\icmlauthor{Justin Domke}{justin.domke@nicta.com.au}
\icmladdress{NICTA and Australian National University}

\icmlkeywords{incremental gradient, big data}

\vskip 0.3in
]

\begin{abstract}
Recent advances in optimization theory have shown that smooth strongly convex
finite sums can be minimized faster than by treating them as a black box "batch" problem.
In this work we introduce a new method in this class with a theoretical convergence 
rate four times faster than existing methods, for sums with sufficiently many terms.
This method is also amendable to a sampling without replacement scheme that in practice
gives further speed-ups. We give empirical results showing state of the 
art performance.
\end{abstract}

\section{Introduction}
\label{sec:introduction}

Many recent advances in the theory and practice of numerical optimization have come 
from the recognition and exploitation of structure. Perhaps the most common structure
is that of finite sums. In machine learning when applying empirical risk minimization 
we almost always end up with an optimization problem involving the minimization 
of a sum with one term per data point.

The recently developed SAG algorithm \citep{SAG} has shown that even with this simple form
of structure, as long as we have sufficiently many data points we are able to do significantly
better than black-box optimization techniques in expectation for smooth strongly convex problems.
In practical terms the difference is often a factor of 10 or more.

The requirement of sufficiently large datasets is fundamental to these methods. We describe the
precise form of this as the big data condition. Essentially, it is the requirement that the amount
of data is on the same order as the condition number of the problem.
The strong convexity requirement is not as onerous. Strong convexity holds in the common case where a quadratic
regularizer is used together with a convex loss.

The SAG method and the Finito method we describe in this work are similar in their form to 
stochastic gradient descent methods, but with one crucial difference: They store additional information
about each data point during optimization. Essentially, when they revisit a data point, they do not treat it as a 
novel piece of information every time. 

Methods for the minimization of finite sums have classically been known as Incremental gradient methods 
\citep{bertsekas-incremental}. The proof techniques used in SAG differ fundamentally from those
used on other incremental gradient methods though. The difference hinges on the requirement that data be accessed
in a randomized order. SAG does not work when data is accessed sequentially each epoch, so any proof
technique which shows even non-divergence for sequential access cannot be applied.

A remarkable property of Finito is the tightness of the theoretical bounds compared to
the practical performance of the algorithm. The practical convergence rate seen is at most twice as good as
the theoretically predicted rate. This sets it apart from methods such as LBFGS where the empirical performance is
often much better than the relatively weak theoretical convergence rates would suggest.

The lack of tuning required also sets Finito apart from stochastic gradient descent (SGD). In order to get good performance
out of SGD, substantial laborious tuning of multiple constants has traditionally been required. A multitude of 
heuristics have been developed to help choose these constants, or adapt them as the method progresses. Such heuristics
are more complex than Finito, and do not have the same theoretical backing. SGD has application outside of convex
problems of course, and we do not propose that Finito will replace SGD in those settings. Even on strongly convex
 problems SGD does not exhibit linear convergence like Finito does.

There are many similarities between SAG, Finto and stochastic dual coordinate descent (SDCA) methods \cite{SDCA}. SDCA
is only applicable to linear predictors. When it can be applied, it has linear convergence with theoretical rates 
similar to SAG and Finito.

\section{Algorithm}
\label{sec:algorithm}

We consider differentiable convex functions of the form
\[
f(w)=\frac{1}{n}\sum^{n}_{i=1}f_{i}(w).
\]

We assume that each $f_{i}$ has Lipschitz continuous gradients with constant $L$ and is
strongly convex with constant $s$. Clearly if we allow $n=1$, virtually all smooth, strongly
convex problems are included. So instead, we will restrict ourselves to problems satisfying the
\emph{big data} condition.

\textbf{Big data condition:} Functions of the above form satisfy the big data condition with constant $\beta$ if
\[
n \geq \beta \frac{L}{s}
\]
Typical values of $\beta$ are $1$-$8$. In plain language, we are considering problems where the amount of data
is of the same order as the condition number ($L/s$) of the problem.

\subsection{Additional Notation}
We superscript with 
$(k)$ to denote the value of the scripted quantity at iteration $k$.
We omit the $n$ superscript on summations, and subscript with $i$ with the implication that indexing starts at $1$.
When we use separate arguments for each $f_{i}$, we denote them $\phi_{i}$.
Let $\bar{\phi}^{(k)}$
denote the average $\bar{\phi}^{(k)}=\frac{1}{n}\sum^{n}_{i}\phi^{(k)}_{i}$. Our step length constant, which depends
on $\beta$, is denoted $\alpha$. We use angle bracket notation for dot products $\langle \cdot , \cdot \rangle$.

\subsection{The Finito algorithm}
We start with a table of known $\phi_{i}^{(0)}$ values, and
a table of known gradients $f_{i}^{\prime}(\phi_{i}^{(0)})$, for each $i$. We will update these two tables during the
course of the algorithm.
The step for iteration $k$, is as follows:
\begin{framed} 
\begin{enumerate}
  \setlength{\itemsep}{1pt}
\item Update $w$ using the step:
\[
w^{(k)}=\bar{\phi}^{(k)}-\frac{1}{\alpha sn}\sum_{i}f_{i}^{\prime}(\phi_{i}^{(k)}).
\]
\item Pick an index $j$ uniformly at random, or using without-replacement sampling as discussed in Section \ref{sec:randomness}.
\item Set $\phi_{j}^{(k+1)}=w^{(k)}$ in the table and leave the other variables the
same ($\phi_{i}^{(k+1)}=\phi_{i}^{(k)}$ for $i\neq j$).
\item Calculate and store $f_{j}^{\prime}(\phi_{j}^{(k+1)})$ in the table. 
\end{enumerate}
\vskip -5mm
\end{framed}
\vskip -2mm
Our main theoretical result is a convergence rate proof for this method.
\begin{thm}
When the big data condition holds with $\beta=2$, $\alpha=2$ may be used. In that setting, if we have initialized all $\phi^{(0)}_{i}$ the same, the convergence rate is:
\[
E\left[f(\bar{\phi}^{(k)})\right]-f(w^{*}) \leq \frac{3}{4s}\left(1-\frac{1}{2n}\right)^{k}\left\Vert f^{\prime}(\bar{\phi}^{(0)})\right\Vert ^{2}.
\]
\end{thm}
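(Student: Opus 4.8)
The plan is to use a Lyapunov (energy) function argument, the standard tool for proving linear convergence of incremental methods that store per-datum information. I would introduce $T^{(k)}$ built from two ingredients: first, an average of per-datum ``Bregman gaps'' $B_i^{(k)} = f_i(\phi_i^{(k)}) - f_i(w^{*}) - \langle f_i'(w^{*}), \phi_i^{(k)} - w^{*}\rangle$, each nonnegative and, by strong convexity and Lipschitz smoothness, sandwiched as $\tfrac{s}{2}\|\phi_i^{(k)} - w^{*}\|^2 \le B_i^{(k)} \le \tfrac{L}{2}\|\phi_i^{(k)} - w^{*}\|^2$; and second, a quadratic term measuring the distance of the averaged iterate to the optimum. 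Since $w^{(k)} = \bar\phi^{(k)} - \tfrac{1}{\alpha s n}\sum_i f_i'(\phi_i^{(k)})$, completing the square lets me trade a term $\tfrac{\alpha s}{2}\|\bar\phi^{(k)} - w^{*}\|^2$ together with the average-gradient cross term for $\tfrac{\alpha s}{2}\|w^{(k)} - w^{*}\|^2$, which is the form that interacts cleanly with the update. The aim is to prove a one-step contraction $E[\,T^{(k+1)} \mid \mathcal{F}_k\,] \le (1 - \tfrac{1}{2n})\,T^{(k)}$ for a $T$ that also dominates the quantity of interest.

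Next I would compute the conditional expectation over the uniformly random index $j$. Because only $\phi_j$ is overwritten by $w^{(k)}$ while the rest are untouched, each per-datum quantity is replaced by its value at $w^{(k)}$ with probability $1/n$ and left alone otherwise; the Bregman part therefore changes by $\tfrac{1}{n}\big[(f(w^{(k)}) - f(w^{*})) - \tfrac1n\sum_i B_i^{(k)}\big]$, where I have used the optimality condition $\tfrac1n\sum_i f_i'(w^{*}) = f'(w^{*}) = 0$ to discard the linear term. For the quadratic part I would expand $\bar\phi^{(k+1)} = \bar\phi^{(k)} + \tfrac1n(w^{(k)} - \phi_j^{(k)})$, take expectation over $j$, and obtain a drift term pulling $\bar\phi^{(k)}$ toward $w^{(k)}$ plus a second-order variance term.

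The analytical core is then to bound the newly introduced positive contributions against the negative ones. I would use $L$-smoothness to bound $f(w^{(k)}) - f(w^{*})$ and the inserted Bregman values by multiples of $\|w^{(k)} - w^{*}\|^2$, strong convexity to lower bound the old gaps, and the smoothness consequence $\|f_i'(\phi_i^{(k)}) - f_i'(w^{*})\|^2 \le 2L\,B_i^{(k)}$ to control the average-gradient term appearing in $w^{(k)}$. This is exactly where the hypotheses $\beta = 2$, $\alpha = 2$ and the big data condition $n \ge 2L/s$ are spent: they force the coefficients to align so the contracting (negative) terms dominate and the residual collapses to the clean factor $1 - \tfrac{1}{2n}$. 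I expect this constant-tuning step to be the main obstacle --- choosing the relative weights of the two parts of $T^{(k)}$ so that every positive term is absorbed and nothing worse than $1 - \tfrac{1}{2n}$ survives is delicate, and it is presumably what pins down the specific values in the statement.

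Finally I would telescope the recursion to $E[T^{(k)}] \le (1 - \tfrac{1}{2n})^k\,T^{(0)}$, then lower bound $T^{(k)} \ge E[f(\bar\phi^{(k)})] - f(w^{*})$ by convexity (Jensen over the stored points) together with nonnegativity of the quadratic term, and evaluate the initial energy. Because all $\phi_i^{(0)}$ coincide with $\bar\phi^{(0)}$, the stored average gradient is exactly $f'(\bar\phi^{(0)})$; bounding the initial Bregman gap via $f(\bar\phi^{(0)}) - f(w^{*}) \le \tfrac{1}{2s}\|f'(\bar\phi^{(0)})\|^2$ and the initial distance via $\|\bar\phi^{(0)} - w^{*}\| \le \tfrac1s\|f'(\bar\phi^{(0)})\|$ turns $T^{(0)}$ into a multiple of $\|f'(\bar\phi^{(0)})\|^2$, and collecting the constants yields the stated $\tfrac{3}{4s}$.
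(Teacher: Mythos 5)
Your high-level architecture (Lyapunov function, one-step expected contraction, telescoping, bounding the initial energy) mirrors the paper's, but your Lyapunov function is genuinely different --- it is anchored at $w^{*}$ via per-datum Bregman gaps plus a squared distance, in the style of SAGA/SDCA analyses --- whereas the paper's $T=T_1+T_2+T_3+T_4$ never references $w^{*}$ at all: it measures the gap between $f(\bar{\phi})$ and an evaluation of the strongly convex lower bound $R(x)=\frac{1}{n}\sum_{i}f_{i}(\phi_{i})+\frac{1}{n}\sum_{i}\langle f_{i}^{\prime}(\phi_{i}),x-\phi_{i}\rangle+\frac{s}{2n}\sum_{i}\Vert x-\phi_{i}\Vert^{2}$ built from the stored points, plus a variance term. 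This difference is not cosmetic; it is exactly where your sketch breaks. Your final step claims $T^{(k)}\geq E[f(\bar{\phi}^{(k)})]-f(w^{*})$ ``by convexity (Jensen over the stored points).'' Jensen does not apply here: the Bregman gaps $g_{i}(x)=f_{i}(x)-f_{i}(w^{*})-\langle f_{i}^{\prime}(w^{*}),x-w^{*}\rangle$ are \emph{different} functions for different $i$, and the inequality $\frac{1}{n}\sum_{i}g_{i}(\phi_{i})\geq\frac{1}{n}\sum_{i}g_{i}(\bar{\phi})=f(\bar{\phi})-f(w^{*})$ is false in general. Concretely, take $n=2$, $f_{1}(x)=x^{2}$, $f_{2}(x)=0$, $w^{*}=0$, $\phi_{1}=0$, $\phi_{2}=2$: the left side is $0$ while the right side is $1/2$. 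The only available repair, $\frac{1}{n}\sum_{i}g_{i}(\phi_{i})\geq\frac{s}{2n}\sum_{i}\Vert\phi_{i}-w^{*}\Vert^{2}\geq\frac{s}{2}\Vert\bar{\phi}-w^{*}\Vert^{2}\geq\frac{s}{L}\bigl(f(\bar{\phi})-f(w^{*})\bigr)$, costs a factor of $L/s$ (as large as $n/2$ under the big data condition), so the stated constant $\frac{3}{4s}$ cannot be recovered by your route. The paper sidesteps this entirely: since $\min_{x}R(x)\leq f(w^{*})$ by strong convexity and $R$ is convex, it gets $f(\bar{\phi})-f(w^{*})\leq\alpha T$, losing only the factor $\alpha=2$.

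The second gap is that the heart of the theorem --- the one-step contraction $E[T^{(k+1)}]\leq(1-\frac{1}{2n})T^{(k)}$ --- is asserted rather than derived. You write that the hypotheses $\alpha=\beta=2$ ``force the coefficients to align,'' but this alignment is the entire technical content: the paper needs four separate term-by-term expectation lemmas (one per $T_{i}$, proved in its appendix) and then a page of cancellations invoking several auxiliary inequalities before the residual collapses to $-\frac{1}{\alpha n}T$ plus a term whose sign is controlled precisely by the condition $\frac{2}{\alpha}-\frac{1}{\alpha^{2}}-\beta+\frac{\beta}{\alpha}\leq0$. Whether your $w^{*}$-anchored $T$ (with some choice of relative weights) contracts at rate exactly $\frac{1}{2n}$ is not established by the sketch, and analyses of this family that do use such Lyapunov functions typically yield weaker constants (e.g.\ rates of the form $1-\frac{1}{4n}$). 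So the proposal is a plausible research direction, but as written it contains one incorrect step (the Jensen claim) and omits the main computation; it does not constitute a proof of the stated bound.
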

See Section \ref{sec:proof} for the proof. In contrast, SAG achieves a $\left(1-\frac{1}{8n}\right)$ rate when $\beta=2$. Note that on a per epoch basis, the Finito rate is $\left(1-\frac{1}{2n}\right)^{n} \approx \exp(-1/2) = 0.606$.  To put that into context, 10 epochs will see the error bound reduced by more than 148x.

One notable feature of our method is the fixed step size. In typical machine learning problems the strong convexity constant is given by the strength constant of the quadratic regularizer used. Since this is a known quantity, as long as the big data condition holds $\alpha=2$ may be used without any tuning or adjustment of Finito required. This lack of tuning is a major feature of Finito.

In cases where the big data condition does not hold, we conjecture that the step size must be reduced proportionally to the violation of the big data condition. In practice, the most effective step size can be found by testing a number of step sizes, as is usually done with other stochastic optimisation methods.

A simple way of satisfying the big data condition is to duplicate your data enough times so then holds. This is not as effective in practice as just changing the step size, and of course it uses more memory. However it does fall within the current theory.

Another difference compared to the SAG method is that we store both gradients and points $\phi_{i}$. We do not actually need twice as much memory however as they can be stored summed together. In particular we store the quantities $p_{i} = f_{i}^{\prime}(\phi_{i}) - \alpha s \phi_{i}$, and use the update rule $w=-\frac{1}{\alpha s n}\sum_{i} p_{i}$. This trick does not work when step lengths are adjusted during optimization however. The storage of $\phi_{i}$ is also a disadvantage when the gradients $f_{i}^{\prime}(\phi_{i})$ are sparse but $\phi_{i}$ are not sparse, as it can cause significant additional memory usage. We do not recommend the usage of Finito when gradients are sparse.

The SAG algorithm differs from Finito only in the $w$ update and step lengths:
\[
w^{(k)} = w^{(k-1)} -\frac{1}{16 L n}\sum_{i}f_{i}^{\prime}(\phi_{i}^{(k)}).
\]

\section{Randomness is key}
\label{sec:randomness}
By far the most interesting aspect of the SAG and Finito methods is the random choice of index at each iteration. We are not in an online setting, so there is no inherent randomness in the problem. Yet it seems that a randomized method is required. Neither method works in practice when the same ordering is used each pass, or in fact with any non-random access scheme we have tried. It is hard to emphasize enough the importance of randomness here. The technique of pre-permuting the data, then doing in order passes after that, also does not work. Reducing the step size in SAG or Finito by 1 or 2 orders of magnitude does not fix the convergence issues either.

Other methods, such as standard SGD, have been noted by various authors to exhibit speed-ups when random sampling is used instead of in order passes, but the differences are not as extreme as convergence v.s. non-convergence. Perhaps the most similar problem is that of coordinate descent on smooth convex functions. Coordinate descent cannot diverge when non-random orderings are used, but convergence rates are substantially worse in the non-randomized setting (\citealt{nes-coord}, \citealt{random-coordinate}).

Reducing the step size $\alpha$ by a much larger amount, namely by a factor of $n$, does allow for non-randomized orderings to be used. This gives an \emph{extremely} slow method however. This is the case covered by the MISO \citep{miso}. A similar reduction in step size gives convergence under non-randomized orderings for SAG also. Convergence rates for incremental sub-gradient methods with a variety of orderings appear in the literature also \citep{incremental-subgradient}.

\subsection*{Sampling without replacement is much faster}
\label{sec:permuted}
Other sampling schemes, such as sampling without replacement, should be considered. In detail, we mean the case where each "pass" over the data is a set of sampling without replacement steps, which continue until no data remains, after which another "pass" starts afresh. We call this the permuted case for simplicity, as it is the same as re-permuting the data after each pass. In practice, this approach does not give any speedup with SAG, however it works spectacularly well with Finito. We see speedups of up to a factor of two using this approach. This is one of the major differences in practice between SAG and Finito. We should note that we have no theory to support this case however. We are not aware of any analysis that proves faster convergence rates of any optimization method under a sampling without replacement scheme. An interesting discussion of SGD under without-replacement sampling appears in \citet{valley}. 

The SDCA method is also sometimes used with a permuted ordering \cite{SDCA}, our experiments in Section \ref{sec:experiments} show that this sometimes results in a large speedup over uniform random sampling, although it does not appear to be as reliable as with Finito.

\section{Proximal variant}
\label{sec:proximal}
We now consider composite problems of the form
\[
f(w)=\frac{1}{n}\sum_{i}f_{i}(w)+\lambda r(w),
\]
where $r$ is convex but not necessarily smooth or strongly convex. Such problems are often addressed using proximal algorithms, particularly when the proximal operator for $r$:
\[
\text{prox}_{\lambda}^{r}(z) = \text{argmin}_{x}\quad\frac{1}{2}\left\Vert x-z\right\Vert ^{2}+\lambda r(x)
\]
has a closed form solution. An example would be the use of L1 regularization. We now describe the Finito update for this setting.
First notice that when we set $w$ in the Finito method, it can be interpreted as minimizing the quantity:
\begin{align*}
B(x) &= \frac{1}{n}\sum_{i}f_{i}(\phi_{i})+\frac{1}{n}\sum_{i}\left\langle f_{i}^{\prime}(\phi_{i}),x-\phi_{i}\right\rangle\\
&+\frac{\alpha s}{2n}\sum_{i}\left\Vert x-\phi_{i}\right\Vert ^{2},
\end{align*}
with respect to $x$, for fixed $\phi_{i}$. This is related to the upper bound minimized by MISO, where $\alpha s$ is instead $L$. It is straight forward to modify this for the composite case:
\begin{align*}
B_{\lambda r}(x) &= \lambda r(x)+\frac{1}{n}\sum_{i}f_{i}(\phi_{i})+\frac{1}{n}\sum_{i}\left\langle f_{i}^{\prime}(\phi_{i}),x-\phi_{i}\right\rangle\\
 &+\frac{\alpha s}{2n}\sum_{i}\left\Vert x-\phi_{i}\right\Vert ^{2}.
\end{align*}
The minimizer of the modified $B_{\lambda r}$ can be expressed using the proximal operator as:
\[
w=\text{prox}_{\lambda/\alpha s}^{r}\left(\bar{\phi}-\frac{1}{\alpha sn}\sum_{i}f_{i}^{\prime}(\phi_{i})\right).
\]
This strongly resembles the update in the standard gradient descent setting, which for a step size of $1/L$ is
\[
w=\text{prox}_{\lambda/L}^{r}\left(w^{(k-1)}-\frac{1}{L}f^{\prime}(w^{(k-1)})\right).
\]
We have not yet developed any theory supporting the proximal variant of Finito, although empirical evidence suggests it has the same convergence rate as in the non-proximal case.

\section{Convergence proof}
\label{sec:proof}
We start by stating two simple lemmas. All expectations in the following are over the choice of index $j$ at step $k$. Quantities without superscripts are at their values at iteration $k$.
\begin{lem}
The expected step is
\[
E[w^{(k+1)}]-w=-\frac{1}{\alpha sn}f^{\prime}(w).
\]
I.e. the $w$ step is a gradient descent step in expectation ($\frac{1}{\alpha sn} \propto \frac{1}{L}$). A similar equality also holds for SGD, but not for SAG.
\end{lem}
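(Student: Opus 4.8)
The plan is to write $w^{(k+1)}$ explicitly in terms of $w^{(k)}$ by exploiting the fact that exactly one table entry changes between consecutive iterations, then average over the uniform choice of $j$ and watch the stored-gradient terms cancel. First I would record how the two running aggregates change when index $j$ is selected. Since $\phi_j^{(k+1)} = w^{(k)}$ while every other entry is unchanged, the average shifts by
\[
\bar{\phi}^{(k+1)} = \bar{\phi}^{(k)} + \frac{1}{n}\left(w^{(k)}-\phi_j^{(k)}\right),
\]
and the gradient sum shifts by $f_j^{\prime}(w^{(k)}) - f_j^{\prime}(\phi_j^{(k)})$.

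Substituting both shifts into the definition of $w^{(k+1)}$ and using the step equation $w^{(k)} = \bar{\phi}^{(k)} - \frac{1}{\alpha sn}\sum_i f_i^{\prime}(\phi_i^{(k)})$ to eliminate $\bar{\phi}^{(k)}$, I would obtain the closed form
\[
w^{(k+1)} = w^{(k)} + \frac{1}{n}\left(w^{(k)}-\phi_j^{(k)}\right) - \frac{1}{\alpha sn}\left(f_j^{\prime}(w^{(k)}) - f_j^{\prime}(\phi_j^{(k)})\right),
\]
in which the random index $j$ enters only through the single selected term. Next I would take the expectation over $j$ drawn uniformly from $\{1,\dots,n\}$: the first correction averages to $\frac{1}{n}(w^{(k)} - \bar{\phi}^{(k)})$, the $f_j^{\prime}(w^{(k)})$ contribution averages to $\frac{1}{n}\sum_i f_i^{\prime}(w^{(k)}) = f^{\prime}(w^{(k)})$, and the $f_j^{\prime}(\phi_j^{(k)})$ contribution averages to the stored-gradient mean $\frac{1}{n}\sum_i f_i^{\prime}(\phi_i^{(k)})$.

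The observation that closes the proof is that $\frac{1}{n}(w^{(k)} - \bar{\phi}^{(k)})$ equals, by the step equation once more, exactly $-\frac{1}{\alpha sn}\cdot\frac{1}{n}\sum_i f_i^{\prime}(\phi_i^{(k)})$, which cancels precisely against the averaged $+\frac{1}{\alpha sn}f_j^{\prime}(\phi_j^{(k)})$ term. All dependence on the stored gradients therefore vanishes, leaving $E[w^{(k+1)}] = w^{(k)} - \frac{1}{\alpha sn}f^{\prime}(w^{(k)})$, which is the claim with $w = w^{(k)}$. The only delicate point is this cancellation: it hinges on the $\frac{1}{n}$ factor generated by updating a single coordinate matching the $\frac{1}{\alpha sn}$ scaling of the step, so I would carefully track which quantities are evaluated at $w^{(k)}$ versus at the $\phi_i^{(k)}$, since everything else is routine algebra.
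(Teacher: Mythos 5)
Your proposal is correct and follows essentially the same route as the paper's proof: express the one-step change $w^{(k+1)}-w^{(k)}$ as $\frac{1}{n}(w-\phi_{j})-\frac{1}{\alpha sn}\bigl(f_{j}^{\prime}(w)-f_{j}^{\prime}(\phi_{j})\bigr)$, take the expectation over the uniform index $j$, and cancel the stored-gradient average against $\frac{1}{n}(w-\bar{\phi})$ via the definition of the $w$ step. The paper simply states this more compactly, writing the displacement identity directly rather than deriving it from the table-update shifts as you do.
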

\begin{proof}
\begin{align*}
E[w&^{(k+1)}]-w \\
& = E \left[\frac{1}{n}(w-\phi_{j})-\frac{1}{\alpha sn}\left(f_{j}^{\prime}(w)-f_{j}^{\prime}(\phi_{j})\right)\right]\\
 & = \frac{1}{n}(w-\bar{\phi})-\frac{1}{\alpha sn}f^{\prime}(w)+\frac{1}{\alpha sn^{2}}\sum_{i}f_{i}^{\prime}(\phi_{i})
\end{align*}
Now simplify $\frac{1}{n}(w-\bar{\phi})$ as $-\frac{1}{\alpha sn^{2}}\sum_{i}f_{i}^{\prime}(\phi_{i})$,
so the only term that remains is $-\frac{1}{\alpha sn}f^{\prime}(w)$.\end{proof}

\begin{lem}
\label{lem:Decomposition-of-variance}(Decomposition of variance)
We can decompose $\frac{1}{n}\sum_{i}\left\Vert w-\phi_{i}\right\Vert ^{2}$
as
\[
\frac{1}{n}\sum_{i}\left\Vert w-\phi_{i}\right\Vert ^{2}=\left\Vert w-\bar{\phi}\right\Vert ^{2}+\frac{1}{n}\sum_{i}\left\Vert \bar{\phi}-\phi_{i}\right\Vert ^{2}.
\]
\end{lem}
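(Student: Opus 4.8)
The plan is to prove this by completing the square, which is the standard parallel-axis (bias--variance) manipulation. First I would insert the average into each term of the left-hand sum by writing $w-\phi_i = (w-\bar{\phi}) + (\bar{\phi}-\phi_i)$, and then expand the squared norm. This produces three groups of terms: a term $\|w-\bar{\phi}\|^2$ that is independent of the index $i$, a cross term proportional to $\langle w-\bar{\phi},\, \bar{\phi}-\phi_i\rangle$, and the residual term $\|\bar{\phi}-\phi_i\|^2$.

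Next I would evaluate each group after applying the averaging operator $\frac{1}{n}\sum_i$. The first term simply reproduces $\|w-\bar{\phi}\|^2$, since averaging a quantity that does not depend on $i$ leaves it unchanged. The residual term yields exactly the second summand $\frac{1}{n}\sum_i \|\bar{\phi}-\phi_i\|^2$ on the right-hand side, with no further work needed.

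The only step requiring a moment's thought, and the crux of the argument, is the cross term. Pulling the constant vector $w-\bar{\phi}$ out of the sum by linearity of the inner product leaves $2\langle w-\bar{\phi},\, \frac{1}{n}\sum_i(\bar{\phi}-\phi_i)\rangle$, and by the very definition $\bar{\phi}=\frac{1}{n}\sum_i \phi_i$ the averaged vector is $\bar{\phi}-\bar{\phi}=0$. Hence the cross term vanishes identically, and collecting the surviving two groups gives precisely the claimed identity. I do not anticipate any genuine obstacle: the statement is an exact algebraic equality, so the proof needs no inequalities and no appeal to convexity or the big data condition, only the centering property of the mean $\bar{\phi}$.
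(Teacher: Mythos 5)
Your proposal is correct and matches the paper's own proof exactly: both expand $\left\Vert w-\phi_{i}\right\Vert ^{2}$ via the decomposition $w-\phi_{i}=(w-\bar{\phi})+(\bar{\phi}-\phi_{i})$ and kill the cross term using $\frac{1}{n}\sum_{i}(\bar{\phi}-\phi_{i})=\bar{\phi}-\bar{\phi}=0$. Nothing is missing; this is the standard parallel-axis identity and your argument is complete.
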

\begin{proof}
\begin{align*}
&\frac{1}{n}\sum_{i}\left\Vert w-\phi_{i}\right\Vert ^{2}\\
& = \left\Vert w-\bar{\phi}\right\Vert ^{2}+\frac{1}{n}\sum_{i}\left\Vert \bar{\phi}-\phi_{i}\right\Vert ^{2}+\frac{2}{n}\sum_{i}\left\langle w-\bar{\phi},\bar{\phi}-\phi_{i}\right\rangle \nonumber \\
 & = \left\Vert w-\bar{\phi}\right\Vert ^{2}+\frac{1}{n}\sum_{i}\left\Vert \bar{\phi}-\phi_{i}\right\Vert ^{2}+2\left\langle w-\bar{\phi},\bar{\phi}- \bar{\phi} \right\rangle \nonumber \\
 & = \left\Vert w-\bar{\phi}\right\Vert ^{2}+\frac{1}{n}\sum_{i}\left\Vert \bar{\phi}-\phi_{i}\right\Vert ^{2}.\label{eq:main-cancelation}
\end{align*}
\end{proof}

\subsection*{Main proof}
Our proof proceeds by construction of a Lyapunov function $T$; that is, a function that bounds a quantity of interest, and that decreases each iteration in expectation. Our Lyapunov function $T=T_1+T_2+T_3+T_4$ is composed of the sum of the following terms,
\[
T_{1}=f(\bar{\phi}),
\]
\[
T_{2}=-\frac{1}{n}\sum_{i}f_{i}(\phi_{i})-\frac{1}{n}\sum_{i}\left\langle f_{i}^{\prime}(\phi_{i}),w-\phi_{i}\right\rangle ,
\]
\[
T_{3}=-\frac{s}{2n}\sum_{i}\left\Vert w-\phi_{i}\right\Vert ^{2},
\]
\[
T_{4}=\frac{s}{2n}\sum_{i}\left\Vert \bar{\phi}-\phi_{i}\right\Vert ^{2}.
\]
We now state how each term changes between steps $k+1$ and $k$. Proofs are found in the appendix in the supplementary material:
\[
E[T_{1}^{(k+1)}]-T_{1}\leq\frac{1}{n}\left\langle f^{\prime}(\bar{\phi}),w-\bar{\phi}\right\rangle +\frac{L}{2 n^{3}}\sum_{i}\left\Vert w-\phi_{i}\right\Vert ^{2},
\]
\begin{align*}
E[T_{2}^{(k+1)}]-T_{2} & \leq -\frac{1}{n}T_{2} -\frac{1}{n}f(w)\\
 & + (\frac{1}{\alpha}-\frac{\beta}{n})\frac{1}{sn^{3}}\sum_{i}\left\Vert f_{i}^{\prime}(w)-f_{i}^{\prime}(\phi_{i})\right\Vert ^{2}\\
 & + \frac{1}{n}\left\langle \bar{\phi}-w,f^{\prime}(w)\right\rangle \\
 & -\frac{1}{n^{3}}\sum_{i}\left\langle f_{i}^{\prime}(w)-f_{i}^{\prime}(\phi_{i}),w-\phi_{i}\right\rangle ,
\end{align*}
\begin{align*}
E[T_{3}^{(k+1)}]-T_{3} & = -(\frac{1}{n}+\frac{1}{n^2})T_{3}
  + \frac{1}{\alpha n}\left\langle f^{\prime}(w),w-\bar{\phi}\right\rangle\\
 & -\frac{1}{2\alpha^{2}sn^{3}}\sum_{i}\left\Vert f_{i}^{\prime}(\phi_{i})-f_{i}^{\prime}(w)\right\Vert ^{2},
\end{align*}
\begin{align*}
E[T_{4}^{(k+1)}]-T_{4}&=-\frac{s}{2n^{2}}\sum_{i}\left\Vert \bar{\phi}-\phi_{i}\right\Vert ^{2}
+\frac{s}{2n}\left\Vert \bar{\phi}-w\right\Vert ^{2}\\
&-\frac{s}{2n^{3}}\sum_{i}\left\Vert w-\phi_{i}\right\Vert ^{2}.
\end{align*}

\begin{thm}
\label{thm:main}
Between steps $k$ and $k+1$, if $\frac{2}{\alpha}-\frac{1}{\alpha^{2}}-\beta+\frac{\beta}{\alpha}\leq 0$, $\alpha\geq2$ and $\beta \geq 2$ then
\[
E[T^{(k+1)}]-T\leq-\frac{1}{\alpha n}T.
\]
\end{thm}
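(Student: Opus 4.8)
The plan is to prove the one-step contraction additively. Since the four quantities $E[T_i^{(k+1)}]-T_i$ are already supplied, it suffices to sum them and show the total is at most $-\frac{1}{\alpha n}T$. Concretely, I would move everything to one side and aim to establish $\sum_{i}(E[T_i^{(k+1)}]-T_i)+\frac{1}{\alpha n}T\leq 0$, then argue that the left-hand side decomposes into pieces each of which is non-positive under the three hypotheses. The work is essentially bookkeeping plus a handful of convexity inequalities and the big data condition $L\leq ns/\beta$; no new machinery is needed beyond the two lemmas.

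First I would group terms by type. The function-value and inner-product pieces are handled by convexity of $f$: the bound $\langle f'(\bar\phi),w-\bar\phi\rangle\leq f(w)-f(\bar\phi)$ turns the $T_1$ inner product into a $-\frac1n T_1$ contribution while cancelling the stray $-\frac1n f(w)$ arising in the $T_2$ change, and the two $\langle f'(w),w-\bar\phi\rangle$ terms (from the $T_2$ and $T_3$ changes) combine with coefficient $-\frac1n(1-\frac1\alpha)$. The explicit $-\frac1n T_2$ already dominates the required $-\frac1{\alpha n}T_2$ since $\alpha\geq 2$. The quadratic pieces are reconciled using the decomposition-of-variance lemma: the $\sum_i\|w-\phi_i\|^2$ contributions in the $T_3$ and $T_4$ changes, together with $\sum_i\|\bar\phi-\phi_i\|^2$, collapse via $\frac1n\sum_i\|w-\phi_i\|^2=\|w-\bar\phi\|^2+\frac1n\sum_i\|\bar\phi-\phi_i\|^2$ into clean multiples of $\|w-\bar\phi\|^2$, so that $T_3$ and $T_4$ effectively behave as the single term $-\frac{s}{2}\|w-\bar\phi\|^2$. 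Finally the big data condition is used to absorb the $\frac{L}{2n^3}\sum_i\|w-\phi_i\|^2$ term from the $T_1$ change, replacing $L$ by $ns/\beta$.

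What remains, and what I expect to be the main obstacle, is to control the terms involving the per-example gradient differences jointly: the quadratic $\sum_i\|f_i'(w)-f_i'(\phi_i)\|^2$ (which carries coefficients from both the $T_2$ and $T_3$ changes), the cross term $\sum_i\langle f_i'(w)-f_i'(\phi_i),w-\phi_i\rangle$, and the residual $\|w-\bar\phi\|^2$ term --- the last of which, through $w-\bar\phi=-\frac{1}{\alpha sn}\sum_i f_i'(\phi_i)$, also feeds back into the gradient differences. I would bound the cross term from below using co-coercivity, $\langle f_i'(w)-f_i'(\phi_i),w-\phi_i\rangle\geq \frac1L\|f_i'(w)-f_i'(\phi_i)\|^2$, and strong convexity, $\langle f_i'(w)-f_i'(\phi_i),w-\phi_i\rangle\geq s\|w-\phi_i\|^2$, apportioning its coefficient between the two so as to dominate both the leftover quadratic-in-$(w-\phi_i)$ piece and the gradient-difference piece. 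Collecting the net coefficient on $\sum_i\|f_i'(w)-f_i'(\phi_i)\|^2$ and again invoking $L\leq ns/\beta$ should produce exactly the scalar inequality $\frac2\alpha-\frac1{\alpha^2}-\beta+\frac\beta\alpha\leq 0$, while $\alpha\geq 2$ and $\beta\geq 2$ keep the signs of the $T_2$ and $\|w-\bar\phi\|^2$ contributions correct. The delicate point is precisely this simultaneous accounting: the single cross term must pay for two different non-negative residuals at once, and each bound has to be oriented correctly given that several of the coefficients being combined are negative.
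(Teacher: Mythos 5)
Your overall skeleton---sum the four per-term lemmas, regroup, and close with convexity inequalities plus the big data condition---is the same as the paper's, but two of your accounting moves are invalid, and one of them is fatal. The smaller issue first: neither $T_1=f(\bar{\phi})$ nor $T_2$ is sign-definite, so you cannot relax $-\frac{1}{n}T_1$ to $-\frac{1}{\alpha n}T_1$, and the claim that ``the explicit $-\frac{1}{n}T_2$ already dominates $-\frac{1}{\alpha n}T_2$ since $\alpha\geq 2$'' is exactly the assertion $T_2\geq 0$, which is false in general: adding a positive constant to every $f_i$ leaves the algorithm, $T$, and the theorem unchanged but drives $T_2$ negative, so no step of a correct proof can require $T_2\geq0$ or $f(\bar{\phi})\geq0$. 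The paper avoids this by applying its strong-convexity inequality (B3) to only the $\frac{1}{\alpha}$ fraction of $\frac{1}{n}\langle f^{\prime}(\bar{\phi}),w-\bar{\phi}\rangle$, so the coefficients of $T_1$ and $T_2$ come out as exactly $-\frac{1}{\alpha n}$, and the surpluses are never discarded: they are kept in the combination $(1-\frac{1}{\alpha})\left[-\frac{1}{n}f(w)-\frac{1}{n}T_{2}\right]=-(1-\frac{1}{\alpha})\frac{1}{n^{2}}\sum_{i}\left(f_{i}(w)-f_{i}(\phi_{i})-\langle f_{i}^{\prime}(\phi_{i}),w-\phi_{i}\rangle\right)\leq0$, a sum of Bregman divergences.

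The fatal gap is the final accounting, precisely the ``delicate point'' you flag. You ask the cross term $-\frac{1}{n^{3}}\sum_{i}\langle f_{i}^{\prime}(w)-f_{i}^{\prime}(\phi_{i}),w-\phi_{i}\rangle$ to pay for both the term $\frac{L}{2n^{3}}\sum_{i}\|w-\phi_{i}\|^{2}$ and the gradient-difference quadratic. The strong-convexity branch of your apportionment delivers only $-\frac{\theta s}{n^{3}}\sum_{i}\|w-\phi_{i}\|^{2}$, so you need $\theta s\geq L/2$, i.e.\ $\theta\geq L/2s$, which exceeds $1$ as soon as $L>2s$; under the big data condition $L/s$ can be of order $n/\beta$, so this fails by a factor of order $n$. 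Rewriting $L\leq ns/\beta$ first does not help: the term becomes $\frac{s}{2\beta n^{2}}\sum_{i}\|w-\phi_{i}\|^{2}$, an order-$1/n^{2}$ quantity, while the cross term is order $1/n^{3}$. The only resource in the analysis with the right scaling is the Bregman surplus above, and your very first move---cancelling the stray $-\frac{1}{n}f(w)$ in full against the plain-convexity bound on the $T_1$ inner product---destroys it. The paper instead spends $\frac{1}{\beta}$ of that surplus (its Corollary 11, where the big data condition enters) to cancel both the cross term and the $L$-quadratic at once, and spends the remaining fraction $1-\frac{1}{\alpha}-\frac{1}{\beta}\geq 0$ (its B7; this non-negativity is where $\alpha\geq2$ and $\beta\geq2$ are actually used) to dominate the gradient-difference quadratic, producing the advertised coefficient $\frac{2}{\alpha}-\frac{1}{\alpha^{2}}-\beta+\frac{\beta}{\alpha}\leq0$. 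A further casualty of your first move: the residual $(1-\frac{1}{2\alpha})\frac{s}{n}\|w-\bar{\phi}\|^{2}$ is killed in the paper by strong monotonicity (B5) applied to $(1-\frac{1}{\alpha})\frac{1}{n}\langle f^{\prime}(\bar{\phi})-f^{\prime}(w),w-\bar{\phi}\rangle$, which exists only because B3 consumed just the $\frac{1}{\alpha}$ fraction of the $f^{\prime}(\bar{\phi})$ inner product; having spent all of it, you have nothing that produces $-s\|w-\bar{\phi}\|^{2}$ at order $1/n$, and routing $\|w-\bar{\phi}\|^{2}$ through $w-\bar{\phi}=-\frac{1}{\alpha sn}\sum_{i}f_{i}^{\prime}(\phi_{i})$ only creates an uncancellable $\|f^{\prime}(w)\|^{2}$ term. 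So, contrary to your premise that nothing is needed beyond the two lemmas and standard inequalities, the proof hinges on the Bregman-divergence budget and the two appendix inequalities that deploy it.
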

\begin{proof}
We take the three lemmas above and group like terms to get
\begin{align*}
E[T^{(k+1)}]-T & \leq \frac{1}{n}\left\langle f^{\prime}(\bar{\phi}),w-\bar{\phi}\right\rangle + \frac{1}{n^{2}}\sum_{i}f_{i}(\phi_{i}) \\
 & -\frac{1}{n}f(w)+\frac{1}{n^{2}}\sum_{i}\left\langle f_{i}^{\prime}(\phi_{i}),w-\phi_{i}\right\rangle \\
 & + (1-\frac{1}{\alpha})\frac{1}{n}\left\langle f^{\prime}(w),\bar{\phi}-w\right\rangle\\
 & + (\frac{L}{s n}+1)\frac{s}{2n^{2}}\sum_{i}\left\Vert w-\phi_{i}\right\Vert ^{2}\\
 & -\frac{1}{n^{3}}\sum_{i}\left\langle f_{i}^{\prime}(w)-f_{i}^{\prime}(\phi_{i}),w-\phi_{i}\right\rangle \\
 & + (1-\frac{1}{2\alpha})\frac{1}{\alpha sn^{3}}\sum_{i}\left\Vert f_{i}^{\prime}(\phi_{i})-f_{i}^{\prime}(w)\right\Vert ^{2}\\
 & + \frac{s}{2n}\left\Vert w-\bar{\phi}\right\Vert ^{2}-\frac{s}{2n^{2}}\sum_{i}\left\Vert \bar{\phi}-\phi_{i}\right\Vert ^{2}.
\end{align*}
Next we cancel part of the first line using
\[
\frac{1}{\alpha n}\left\langle f^{\prime}(\bar{\phi}),w-\bar{\phi}\right\rangle 
\leq \frac{1}{\alpha n}f(w)- \frac{1}{\alpha n}f(\bar{\phi}) 
-\frac{s}{2\alpha n}\left\Vert w-\bar{\phi}\right\Vert ^{2},
\]
based on B3 in the Appendix. We then pull terms occurring in $-\frac{1}{\alpha n}T$ together, giving $E[T^{(k+1)}]-T \leq$
\begin{align*}
 & -\frac{1}{\alpha n}T+(1-\frac{1}{\alpha})\frac{1}{n}\left\langle f^{\prime}(\bar{\phi})-f^{\prime}(w),w-\bar{\phi}\right\rangle\\
 & + (1-\frac{1}{\alpha})\left[ -\frac{1}{n}f(w) -\frac{1}{n}T_{2} \right]\\
 & + (\frac{L}{s n}+1-\frac{1}{\alpha})\frac{s}{2n^{2}}\sum_{i}\left\Vert w-\phi_{i}\right\Vert ^{2}\\
 & -\frac{1}{n^{3}}\sum_{i}\left\langle f_{i}^{\prime}(w)-f_{i}^{\prime}(\phi_{i}),w-\phi_{i}\right\rangle \\
 & + (1-\frac{1}{2\alpha})\frac{1}{\alpha sn^{3}}\sum_{i}\left\Vert f_{i}^{\prime}(\phi_{i})-f_{i}^{\prime}(w)\right\Vert ^{2}\\
 & + (1-\frac{1}{\alpha})\frac{s}{2n}\left\Vert w-\bar{\phi}\right\Vert ^{2}-(1-\frac{1}{\alpha})\frac{s}{2n^{2}}\sum_{i}\left\Vert \bar{\phi}-\phi_{i}\right\Vert ^{2}.
\end{align*}
Next we use the standard inequality (B5)
\[
(1-\frac{1}{\alpha})\frac{1}{n}\left\langle f^{\prime}(\bar{\phi})-f^{\prime}(w),w-\bar{\phi}\right\rangle \leq-(1-\frac{1}{\alpha})\frac{s}{n}\left\Vert w-\bar{\phi}\right\Vert ^{2},
\]
which changes the bottom row to $-(1-\frac{1}{\alpha})\frac{s}{2n}\left\Vert w-\bar{\phi}\right\Vert ^{2}-(1-\frac{1}{\alpha})\frac{s}{2n^{2}}\sum_{i}\left\Vert \bar{\phi}-\phi_{i}\right\Vert ^{2}$. These two terms can then be grouped using Lemma \ref{lem:Decomposition-of-variance}, to give
\begin{align*}
E[T^{(k+1)}]-T & \leq -\frac{1}{\alpha n}T + \frac{L}{2 n^{3}}\sum_{i}\left\Vert w-\phi_{i}\right\Vert ^{2}\\
 & + (1-\frac{1}{\alpha})\left[ -\frac{1}{n}f(w) -\frac{1}{n}T_{2} \right]\\
 & -\frac{1}{n^{3}}\sum_{i}\left\langle f_{i}^{\prime}(w)-f_{i}^{\prime}(\phi_{i}),w-\phi_{i}\right\rangle \\
 & + (1-\frac{1}{2\alpha})\frac{1}{\alpha sn^{3}}\sum_{i}\left\Vert f_{i}^{\prime}(\phi_{i})-f_{i}^{\prime}(w)\right\Vert ^{2}.
\end{align*}
We use the following inequality (Corollary 11 in Appendix) to cancel against the $\sum_{i}\left\Vert w-\phi_{i}\right\Vert ^{2}$
term:
\begin{align*}
\frac{1}{\beta} \left[ -\frac{1}{n}f(w) -\frac{1}{n}T_{2} \right] 
 & \leq
 \frac{1}{n^{3}}\sum_{i}\left\langle f_{i}^{\prime}(w)-f_{i}^{\prime}(\phi_{i}),w-\phi_{i}\right\rangle \\
& - \frac{L}{2 n^{3}} \sum_{i}\left\Vert w-\phi_{i}\right\Vert ^{2} \\
&-\frac{1}{2sn^{3}}\sum_{i}\left\Vert f_{i}^{\prime}(w)-f_{i}^{\prime}(\phi_{i})\right\Vert ^{2},
\end{align*}
and then apply the following similar inequality (B7 in Appendix) to partially cancel
$\sum_{i}\left\Vert f_{i}(\phi_{i})-f_{i}(w)\right\Vert ^{2}$: 
\begin{align*}
&\left(1-\frac{1}{\alpha}-\frac{1}{\beta}\right)
\left[ -\frac{1}{n}f(w) -\frac{1}{n}T_{2} \right]\\
&\leq-\left(1-\frac{1}{\alpha}-\frac{1}{\beta}\right)\frac{\beta}{2sn^{3}}\sum_{i}\left\Vert f_{i}^{\prime}(\phi_{i})-f_{i}^{\prime}(w)\right\Vert ^{2}.
\end{align*}
Leaving us with
\begin{align*}
& E[T^{(k+1)}]-T \leq -\frac{1}{\alpha n}T\\
 & + (\frac{2}{\alpha}-\frac{1}{\alpha^{2}}-\beta+\frac{\beta}{\alpha})\frac{1}{2sn^{3}}\sum_{i}\left\Vert f_{i}^{\prime}(\phi_{i})-f_{i}^{\prime}(w)\right\Vert ^{2}.
\end{align*}
The remaining gradient norm term is non-positive under the conditions specified in our assumptions.
\end{proof}

\begin{thm}
\label{thm:lyp-to-func}The Lyapunov function bounds $f(\bar{\phi})-f(w^{*})$
as follows:
\[
f(\bar{\phi}^{(k)})-f(w^{*})\leq\alpha T^{(k)}.
\]
\end{thm}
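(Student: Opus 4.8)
The plan is to recognize that the two middle terms of the Lyapunov function assemble into the standard strong-convexity lower bound on $f$ evaluated at $w$, and then to reduce the claimed inequality to a manifestly nonnegative quadratic. Let
\[
\ell(x)=\frac{1}{n}\sum_{i}f_{i}(\phi_{i})+\frac{1}{n}\sum_{i}\left\langle f_{i}^{\prime}(\phi_{i}),x-\phi_{i}\right\rangle +\frac{s}{2n}\sum_{i}\left\Vert x-\phi_{i}\right\Vert ^{2}
\]
be the lower bound built from the current table. Since each $f_{i}$ is $s$-strongly convex, averaging the standard inequality gives $f(x)\geq\ell(x)$ for every $x$. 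The first observation I would make is that $T_{2}+T_{3}=-\ell(w)$ exactly, so that $T=f(\bar{\phi})-\ell(w)+T_{4}$ with $T_{4}=\frac{s}{2n}\sum_{i}\left\Vert \bar{\phi}-\phi_{i}\right\Vert ^{2}\geq0$.

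Next I would move everything to one side: proving $f(\bar{\phi})-f(w^{*})\leq\alpha T$ is equivalent to establishing
\[
(\alpha-1)f(\bar{\phi})-\alpha\,\ell(w)+\frac{\alpha s}{2n}\sum_{i}\left\Vert \bar{\phi}-\phi_{i}\right\Vert ^{2}+f(w^{*})\geq0.
\]
Here I would lower-bound the two genuine function values by $f(\bar{\phi})\geq\ell(\bar{\phi})$ and $f(w^{*})\geq\ell(w^{*})$; this is legitimate precisely because both appear with nonnegative coefficients (using $\alpha\geq1$). What is left is a purely quadratic expression in $\ell$.

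The key simplification is that $\ell$ is an $s$-strongly convex quadratic whose minimizer is $v=\bar{\phi}-\frac{1}{sn}\sum_{i}f_{i}^{\prime}(\phi_{i})$, so $\ell(x)=\ell(v)+\frac{s}{2}\left\Vert x-v\right\Vert ^{2}$ exactly. Substituting this for $\ell(\bar{\phi})$, $\ell(w)$ and $\ell(w^{*})$, the constants $\ell(v)$ cancel because their coefficients sum to $(\alpha-1)-\alpha+1=0$, leaving $\frac{s}{2}$ times
\[
(\alpha-1)\left\Vert \bar{\phi}-v\right\Vert ^{2}-\alpha\left\Vert w-v\right\Vert ^{2}+\frac{\alpha}{n}\sum_{i}\left\Vert \bar{\phi}-\phi_{i}\right\Vert ^{2}+\left\Vert w^{*}-v\right\Vert ^{2}.
\]
I would then invoke the defining identity of the Finito step, $w-\bar{\phi}=\frac{1}{\alpha}(v-\bar{\phi})$, which yields $w-v=(1-\frac{1}{\alpha})(\bar{\phi}-v)$ and hence $\left\Vert w-v\right\Vert ^{2}=(1-\frac{1}{\alpha})^{2}\left\Vert \bar{\phi}-v\right\Vert ^{2}$. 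The first two terms then collapse to $\frac{\alpha-1}{\alpha}\left\Vert \bar{\phi}-v\right\Vert ^{2}\geq0$, while the remaining two terms are nonnegative outright, finishing the proof.

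The main obstacle is simply the bookkeeping in the reduction: one must track signs so that each use of $f\geq\ell$ is applied in the correct direction, and one must notice the cancellation of the $\ell(v)$ constants, which is what renders the $w^{*}$-dependence harmless, since it survives only as the discardable square $\left\Vert w^{*}-v\right\Vert ^{2}$. Everything else is elementary completion of squares of the same flavor as Lemma \ref{lem:Decomposition-of-variance}, and no condition beyond $\alpha\geq1$ is actually required.
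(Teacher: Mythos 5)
Your proof is correct, and it rests on the same foundation as the paper's own argument: your $\ell$ is exactly the paper's lower-bounding model $R$, you make the same identification $T_{2}+T_{3}=-R(w)$ (so $T=f(\bar{\phi})-R(w)+T_{4}$), and you exploit the same collinearity fact $w=(1-\frac{1}{\alpha})\bar{\phi}+\frac{1}{\alpha}v$, where $v$ is the minimizer of $R$. Where you diverge is the finishing step. The paper bounds $R(w)$ by convexity along the segment, $R(w)\le(1-\frac{1}{\alpha})R(\bar{\phi})+\frac{1}{\alpha}R(v)$, and then uses the lower-bounding property twice, via $R(\bar{\phi})\le f(\bar{\phi})$ and $R(v)=\min_{x}R(x)\le R(w^{*})\le f(w^{*})$. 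You instead expand every evaluation of $\ell$ exactly as $\ell(v)+\frac{s}{2}\left\Vert \cdot-v\right\Vert ^{2}$ (legitimate since $\ell$ is a quadratic with Hessian $sI$), let the $\ell(v)$ terms cancel, and reduce the claim to a sum of manifestly nonnegative squares. Your route is slightly sharper: the paper's convexity inequality becomes an identity in your hands, leaving the explicit nonnegative slack $\frac{\alpha-1}{\alpha}\left\Vert \bar{\phi}-v\right\Vert ^{2}+\left\Vert w^{*}-v\right\Vert ^{2}$, and it makes transparent that $\alpha\ge1$ is the only condition needed (the paper's Jensen step also implicitly requires this so that $\frac{1}{\alpha}\in[0,1]$). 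The price is heavier bookkeeping; the paper's version conveys the geometric idea --- evaluate the lower bound partway toward its minimizer --- in three lines, and never needs the exact quadratic structure of $R$, only its convexity and its minimizer.
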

\begin{proof}
Consider the following function, which we will call $R(x)$:
\begin{align*}
R(x)&=\frac{1}{n}\sum_{i}f_{i}(\phi_{i})+\frac{1}{n}\sum_{i}\left\langle f_{i}^{\prime}(\phi_{i}),x-\phi_{i}\right\rangle\\
& +\frac{s}{2n}\sum_{i}\left\Vert x-\phi_{i}\right\Vert ^{2}.
\end{align*}
When evaluated at its minimum with respect to $x$, which we denote $w^{\prime}=\bar{\phi}-\frac{1}{sn}\sum_{i}f_{i}^{\prime}(\phi_{i})$,
it is a lower bound on $f(w^{*})$ by strong convexity. However, we are evaluating at
$w=\bar{\phi}-\frac{1}{\alpha sn}\sum_{i}f_{i}^{\prime}(\phi_{i})$
instead in the (negated) Lyapunv function. $R$ is convex with respect to $x$, so by definition

\begin{align*}
R(w) &= R\left(\left(1-\frac{1}{\alpha}\right)\bar{\phi}+\frac{1}{\alpha}w^{\prime}\right)\\
&\leq \left(1-\frac{1}{\alpha}\right)R(\bar{\phi})+\frac{1}{\alpha}R(w^{\prime}).
\end{align*}
Therefore by the lower bounding property
\begin{align*}
f(\bar{\phi})-R(w) & \geq f(\bar{\phi})-\left(1-\frac{1}{\alpha}\right)R(\bar{\phi})-\frac{1}{\alpha}R(w^{\prime})\\
 & \geq f(\bar{\phi})-\left(1-\frac{1}{\alpha}\right)f(\bar{\phi})-\frac{1}{\alpha}f(w^{*})\\
 & = \frac{1}{\alpha}\left(f(\bar{\phi})-f(w^{*})\right).
\end{align*}
Now note that $T\geq f(\bar{\phi})-R(w)$. So
\[
f(\bar{\phi})-f(w^{*})\leq\alpha T.
\]
\end{proof} 
\begin{thm}
If the Finito method is initialized with all $\phi_{i}^{(0)}$ the same,and the assumptions of Theorem \ref{thm:main} hold, then the convergence rate is:
\[
E\left[f(\bar{\phi}^{(k)})\right]-f(w^{*})
\leq 
\frac{c}{s}\left(1-\frac{1}{\alpha n}\right)^{k}\left\Vert f^{\prime}(\bar{\phi}^{(0)})\right\Vert ^{2},
\]
with $c=\left(1-\frac{1}{2\alpha}\right)$.
\end{thm}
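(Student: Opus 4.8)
The plan is to combine the two results already established — the per-step Lyapunov contraction of Theorem~\ref{thm:main} and the Lyapunov-to-function bound of Theorem~\ref{thm:lyp-to-func} — and then to pin down the initial value $T^{(0)}$ using the fact that all $\phi_{i}^{(0)}$ coincide. First I would rewrite the contraction $E[T^{(k+1)}]-T \leq -\frac{1}{\alpha n}T$ as $E[T^{(k+1)}] \leq \left(1-\frac{1}{\alpha n}\right)T^{(k)}$, where $T^{(k)}$ is itself random through the history of sampled indices. Since the expectation in Theorem~\ref{thm:main} is taken over the single index $j$ drawn at step $k$, this inequality holds conditionally on that history, so taking a further expectation and applying the tower property gives $E[T^{(k+1)}] \leq \left(1-\frac{1}{\alpha n}\right)E[T^{(k)}]$. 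Unrolling this recursion over $k$ steps yields $E[T^{(k)}] \leq \left(1-\frac{1}{\alpha n}\right)^{k}T^{(0)}$. Combining with Theorem~\ref{thm:lyp-to-func}, which gives $f(\bar{\phi}^{(k)})-f(w^{*}) \leq \alpha T^{(k)}$, and taking expectations, produces $E[f(\bar{\phi}^{(k)})]-f(w^{*}) \leq \alpha \left(1-\frac{1}{\alpha n}\right)^{k}T^{(0)}$.

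The remaining and only substantive step is to evaluate $T^{(0)}$ under the stated initialization. Writing $\phi$ for the common value of all $\phi_{i}^{(0)}$, which equals $\bar{\phi}^{(0)}$, the $w$-update collapses to $w^{(0)} = \phi - \frac{1}{\alpha s}f^{\prime}(\phi)$, since $\frac{1}{n}\sum_{i} f_{i}^{\prime}(\phi_{i}^{(0)}) = f^{\prime}(\phi)$; hence $w^{(0)}-\phi = -\frac{1}{\alpha s}f^{\prime}(\phi)$. I would then substitute into the four Lyapunov terms. Because every $\phi_{i}^{(0)}$ equals $\bar{\phi}^{(0)}$, the variance term $T_{4}^{(0)}$ vanishes, and $T_{1}^{(0)} = f(\phi)$ cancels the $-f(\phi)$ produced by $T_{2}^{(0)} = -f(\phi)+\frac{1}{\alpha s}\|f^{\prime}(\phi)\|^{2}$, where the inner-product term was simplified using $w^{(0)}-\phi = -\frac{1}{\alpha s}f^{\prime}(\phi)$. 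The third term reduces to $T_{3}^{(0)} = -\frac{s}{2}\|w^{(0)}-\phi\|^{2} = -\frac{1}{2\alpha^{2}s}\|f^{\prime}(\phi)\|^{2}$. Summing the four contributions leaves $T^{(0)} = \frac{1}{\alpha s}\left(1-\frac{1}{2\alpha}\right)\|f^{\prime}(\bar{\phi}^{(0)})\|^{2}$.

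Finally I would substitute this into the contracted bound, using $\alpha T^{(0)} = \frac{1}{s}\left(1-\frac{1}{2\alpha}\right)\|f^{\prime}(\bar{\phi}^{(0)})\|^{2}$, to obtain $E[f(\bar{\phi}^{(k)})]-f(w^{*}) \leq \frac{c}{s}\left(1-\frac{1}{\alpha n}\right)^{k}\|f^{\prime}(\bar{\phi}^{(0)})\|^{2}$ with $c = 1-\frac{1}{2\alpha}$, as claimed. I do not anticipate a genuine obstacle here: the contraction and the Lyapunov bound do the heavy lifting, and this result is essentially a corollary of them. The only care needed is in the bookkeeping of $T^{(0)}$ — specifically recognizing that the $f(\phi)$ contributions of $T_{1}$ and $T_{2}$ cancel and that $T_{4}^{(0)}=0$ by the common initialization — which I regard as the main (though minor) point to get right.
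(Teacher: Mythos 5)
Your proposal is correct and follows essentially the same route as the paper: unroll the contraction of Theorem~\ref{thm:main}, apply Theorem~\ref{thm:lyp-to-func}, and evaluate $T^{(0)}$ under the common initialization (with $T_4^{(0)}=0$, the $f(\bar{\phi}^{(0)})$ terms cancelling, and the inner-product and quadratic terms combining to $\left(1-\frac{1}{2\alpha}\right)\frac{1}{\alpha s}\Vert f^{\prime}(\bar{\phi}^{(0)})\Vert^{2}$). Your explicit invocation of the tower property to justify the unrolling is a small point of added rigor over the paper's one-line treatment, but the argument is otherwise identical.
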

\begin{proof}
By unrolling Theorem \ref{thm:main}, we get
\[
E[T^{(k)}]\leq\left(1-\frac{1}{\alpha n}\right)^{k}T^{(0)}.
\]
Now using Theorem \ref{thm:lyp-to-func}
\[
E\left[f(\bar{\phi}^{(k)})\right]-f(w^{*})\leq\alpha\left(1-\frac{1}{\alpha n}\right)^{k}T^{(0)}.
\]
We need to control $T^{(0)}$ also. Since we are assuming that
all $\phi_{i}^{0}$ start the same, we have that
\begin{align*}
T^{(0)} & = f(\bar{\phi}^{(0)}) -\frac{1}{n}\sum_{i}f_{i}(\bar{\phi}^{(0)})\\
 &- \frac{1}{n}\sum_{i}\left\langle f_{i}^{\prime}(\bar{\phi}^{(0)}),w^{(0)}-\bar{\phi}^{(0)}\right\rangle -\frac{s}{2}\left\Vert w^{(0)}-\bar{\phi}^{(0)}\right\Vert ^{2}\\
 & = 0 -\left\langle f^{\prime}(\bar{\phi}^{(0)}),w^{(0)}-\bar{\phi}^{(0)}\right\rangle -\frac{s}{2}\left\Vert -\frac{1}{\alpha s}f^{\prime}(\bar{\phi}^{(0)})\right\Vert ^{2}\\
 & = \frac{1}{\alpha s}\left\Vert f^{\prime}(\bar{\phi}^{(0)})\right\Vert ^{2}-\frac{1}{2\alpha^{2}s}\left\Vert f^{\prime}(\bar{\phi}^{(0)})\right\Vert ^{2}\\
 & = \left(1-\frac{1}{2\alpha}\right)\frac{1}{\alpha s}\left\Vert f^{\prime}(\bar{\phi}^{(0)})\right\Vert ^{2}.
\end{align*}
\end{proof}

\section{Lower complexity bounds and exploiting problem structure}
\label{sec:complexity}
The theory for the class of smooth, strongly convex problems with Lipschitz continuous gradients
under first order optimization methods 
(known as $S^{1,1}_{s,L}$) is well developed. These results require the technical condition that the dimensionality of the input space $R^{m}$ is much larger than the number of iterations we will take. For simplicity we will assume this is the case in the following discussions. 

It is known that problems exist in $S^{1,1}_{s,L}$ for
 which the iterate convergence rate is bounded by:
\[
\left\Vert w^{(k)}-w^{*}\right\Vert ^{2}\geq\left(\frac{\sqrt{L/s}-1}{\sqrt{L/s}+1}\right)^{2k}\left\Vert w^{(0)}-w^{*}\right\Vert ^{2}.
\]
In fact, when $s$ and $L$ are known in advance, this rate is achieved up to a small constant factor by several methods, most notably by Nesterov's accelerated gradient descent method (\citealt{nes-opt}, \citealt{nes-book}). In order to achieve
convergence rates faster than this, additional assumptions must be made on the class of functions considered. 

Recent advances have shown that all that is required to achieve significantly faster rates is a finite sum structure, such as
in our problem setup. When the big data condition holds our method achieves a rate ~0.6065 per epoch in expectation. This rate only depends on the condition number indirectly, through the big data condition. For example, with $L/s=1,000,000$, the fastest possible rate for a black box method is a $0.996$, whereas Finito achieves a rate of $0.6065$ in expectation for $n\geq 4,000,000$, or 124x faster. The required amount of data is not unusual in modern
machine learning problems. In practice, when quasi-newton methods are used instead of accelerated methods, a speedup of 10-20x is
more common.

\subsection{Oracle class}
We now describe the (stochastic) oracle class ${FS}^{1,1}_{s,L,n}(R^{m})$ for which SAG and Finito most naturally fit.

\textbf{Function class:} $f(w)=\frac{1}{n}\sum^{n}_{i=1}f_{i}(w)$, with $f_i \in S^{1,1}_{s,L}(R^{m})$.

\textbf{Oracle:} Each query takes a point $x \in R^{m}$, and returns $j$, $f_{j}(w)$ and $f_{j}^{\prime}(w)$, with $j$ chosen uniformly at random.

\textbf{Accuracy: } Find $w$ such that $E[ \left\Vert w^{(k)}-w^{*}\right\Vert ^{2}] \leq \epsilon $.

The main choice made in formulating this definition is putting the random choice in the oracle. This restricts the methods allowed quite strongly. The alternative case, where the index $j$ is input to the oracle in addition to $x$, is also interesting. Assuming that the method has access to a source of true random indices, we call that class 
${DS}^{1,1}_{s,L,n}(R^{m})$. In Section \ref{sec:randomness} we discuss empirical evidence that suggests that faster rates are possible in ${DS}^{1,1}_{s,L,n}(R^{m})$ than for ${FS}^{1,1}_{s,L,n}(R^{m})$.

It should first be noted that there is a trivial lower bound rate for $f \in {SS}^{1,1}_{s,L,\beta}(R^{m})$ of $\left(1-\frac{1}{n}\right)$ reduction per step. Its not clear if this can be achieved for any finite $\beta$. Finito is only a factor of $2$ off this rate, namely $\left(1-\frac{1}{2n}\right)$ at $\beta=2$, and asymptotes towards this rate for very large $\beta$. SDCA, while not applicable to all problems in this class, also achieves the rate asymptotically.

Another case to consider is the smooth convex but non-strongly convex setting. We still assume Lipschitz continuous gradients. In this setting we will show that for sufficiently high dimensional input spaces, the (non-stochastic) lower complexity bound is the same for the finite sum case and cannot be better than that given by treating $f$ as a single black box function. 

The full proof is in the Appendix, but the idea is as follows: when the $f_{i}$ are not strongly convex, we can choose them such that they do not interact with each other, as long as the dimensionality is much larger than $k$. More precisely, we may choose them so that for any $x$ and $y$ and any $i \neq j$, $\langle f^{\prime}_{i}(x), f^{\prime}_{j}(y) \rangle = 0$ holds. When the functions do not interact, no optimization scheme may reduce the iterate error faster than by just handling each $f_{i}$ separately. Doing so in an in-order fashion gives the same rate as just treating $f$ using a black box method.

For strongly convex $f_{i}$, it is not possible for them to not interact in the above sense. By definition strong convexity requires a quadratic component in each $f_{i}$ that acts on all dimensions.

\begin{figure*}[t]
\begin{centering}
\begin{minipage}{234pt} \centering
\includegraphics[trim=0cm 6mm 0cm 0cm, clip=true]{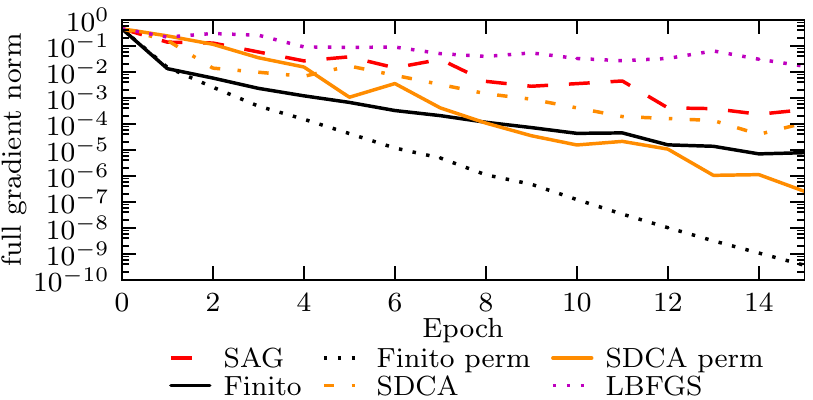}
\vskip -0.1in\caption{MNIST}
\end{minipage}
\begin{minipage}{234pt} \centering
\includegraphics[trim=0cm 6mm 0cm 0cm, clip=true]{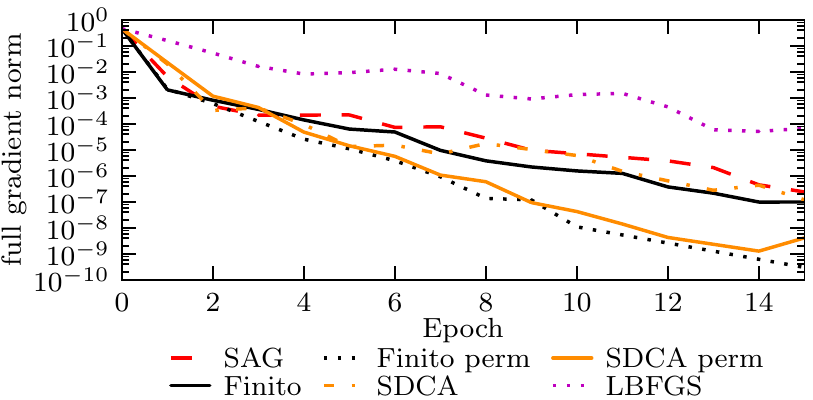} 
\vskip -0.1in\caption{ijcnn1}
\end{minipage}
\begin{minipage}{234pt} \centering
\includegraphics[trim=0cm 6mm 0cm 0cm, clip=true]{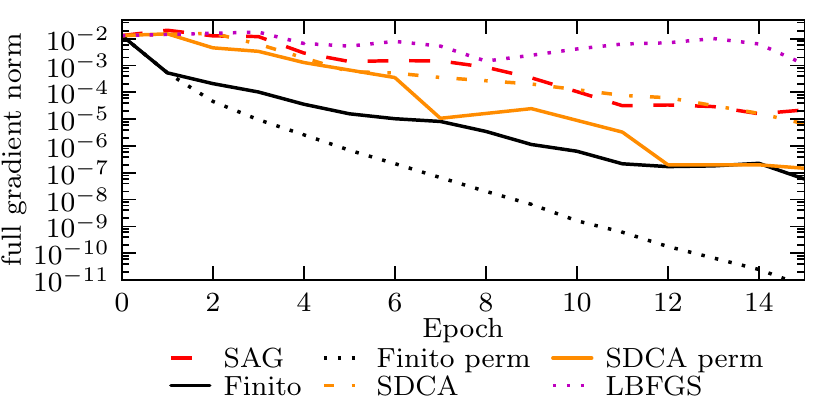}
\vskip -0.1in\caption{Covtype}
\end{minipage}
\begin{minipage}{234pt} \centering
\includegraphics[trim=0cm 6mm 0cm 0cm, clip=true]{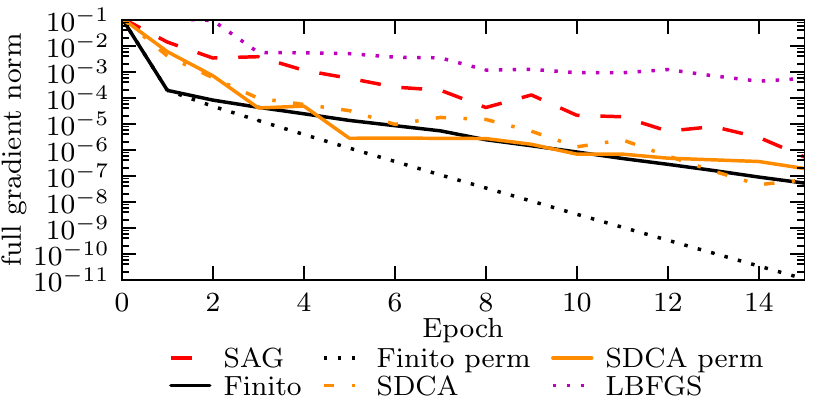}
\vskip -0.1in\caption{Million Song}
\end{minipage}
\begin{minipage}{234pt} \centering
\includegraphics[scale=1.0]{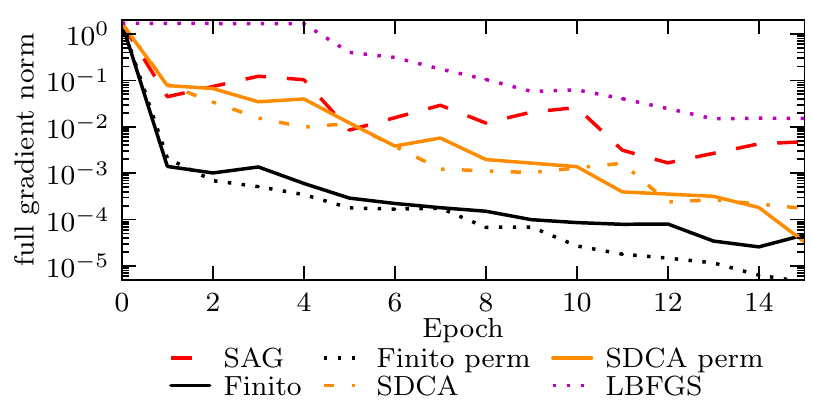}
\vskip -0.1in\caption{slice}
\end{minipage}

\end{centering}
\vskip -0.1in
\caption{Convergence rate plots for test problems}
\label{fig:conv}
\end{figure*}
\section{Experiments}
\label{sec:experiments}
In this section we compare Finito, SAG, SDCA and LBFGS. We only consider problems where the regularizer is large enough so that the big data condition holds, as this is the case our theory supports. However, in practice our method can be used with smaller step sizes in the more general case, in much the same way as SAG. 

Since we do not know the Lipschitz constant for these problems exactly, the SAG method was run for a variety of step sizes, with the one that gave the fastest rate of convergence plotted. The best step-size for SAG is usually not what the theory suggests. \citet{SAG} suggest using $\frac{1}{L}$ instead of the theoretical rate $\frac{1}{16L}$. For Finito, we find that using $\alpha=2$ is the fastest rate when the big data condition holds for any $\beta>1$. This is the step suggested by our theory when $\beta=2$. Interestingly, reducing $\alpha$ to $1$ does not improve the convergence rate. Instead we see no further improvement in our experiments.

For both SAG and Finito we used a differing step rule than suggested by the theory for the first pass. For Finito, during the first pass, since we do not have derivatives for each $\phi_{i}$ yet, we simply sum over the $k$ terms seen so far
\[
w^{(k)}=\frac{1}{k}\sum^{k}_{i} \phi_{i}^{(k)} - \frac{1}{\alpha sk}\sum^{k}_{i}f_{i}^{\prime}(\phi_{i}^{(k)}),
\]
where we process data points in index order for the first pass only. A similar trick is suggested by \citet{SAG} for SAG.

Since SDCA only applies to linear predictors, we are restricted in possible test problems. We choose log loss for 3 binary classification datasets, and quadratic loss for 2 regression tasks. For classification, we tested on the ijcnn1 and covtype datasets\footnote{\url{http://www.csie.ntu.edu.tw/~cjlin/libsvmtools/datasets/binary.html}}, as well as MNIST\footnote{\url{http://yann.lecun.com/exdb/mnist/}} classifying 0-4 against 5-9. For regression, we choose the two datasets from the UCI repository: the million song year regression dataset, and the slice-localization dataset. The training portion of the datasets are of size $5.3\times 10^5$, $5.0\times 10^4$, $6.0\times 10^4$, $4.7\times 10^5$ and $5.3\times 10^4$ respectively.

Figure \ref{fig:conv} shows the results of our experiments. Firstly we can see that LBFGS is not competitive with any of the incremental gradient methods considered. Secondly, the non-permuted SAG, Finito and SDCA often converge at very similar rates. The observed differences are usually down to the speed of the very first pass, where SAG and Finito are using the above mentioned trick to speed their convergence. After the first pass, the slopes of the line are usually comparable. When considering the methods with permutation each pass, we see a clear advantage for Finito. Interestingly, it gives very flat lines, indicating very stable convergence.

\section{Related work}
\label{sec:related}
Traditional incremental gradient methods \cite{bertsekas-incremental} have the same form as SGD, but applied to finite sums. Essentially they are the non-online analogue of SGD. Applying SGD to strongly convex problems does not yield linear convergence, and in practice it is slower than the linear-converging methods we discuss in the remainder of this section.

Besides the methods that fall under the classical Incremental gradient moniker,
SAG and MISO \citep{miso} methods are also related. MISO method falls into the class of upper bound minimization
methods, such as EM and classical gradient descent. MISO is essentially the Finito method, but with step sizes $n$
times smaller. When using these larger step sizes, the method is no longer a upper bound minimization method. Our method can be seen as MISO, but with a step size scheme that gives neither a lower nor upper bound minimisation method. While this work was under peer review, a tech report (\citet{miso2}) was put on arXiv that establishes the convergence rate of MISO with step $\alpha=1$ and with $\beta=2$ as $1-\frac{1}{3n}$ per step. This similar but not quite as good as the $1-\frac{1}{2n}$ rate we establish.

Stochastic Dual Coordinate descent \cite{SDCA} also gives fast convergence rates on problems for which it is applicable. It requires 
computing the convex conjugate of each $f_{i}$, which makes it more complex to implement. For the best performance it has to take
advantage of the structure of the losses also. For simple linear classification and regression problems it can be effective. When using
a sparse dataset, it is a better choice than Finito due to the memory requirements. For linear predictors, its theoretical convergence rate of  $\left(1-\frac{\beta}{(1+\beta)n}\right)$ per step is a little faster than what we establish for Finito, however it does not appear to be faster in our experiments.

\section{Conclusion}
\label{sec:conclusion}
We have presented a new method for minimization of finite sums of smooth strongly convex functions, when there is a sufficiently large number of terms in the summation. We additionally develop some theory for the lower complexity bounds on this class, and show the empirical performance of our method.

\bibliography{papera}
\bibliographystyle{icml2014}

\end{document}


\title{Appendix}

\maketitle

\section{Basic convexity inequalities}

The following inequalities are classical. See Nesterov 1998 for proofs.
They hold for all $x$ \& $y$, when $f\in S_{s,L}^{1,1}$. 

$\text{(B1) }\quad f(y)\leq f(x)+\left\langle f^{\prime}(x),y-x\right\rangle +\frac{L}{2}\left\Vert x-y\right\Vert ^{2}$

$\text{(B2) \quad}f(y)\geq f(x)+\left\langle f^{\prime}(x),y-x\right\rangle +\frac{1}{2L}\left\Vert f^{\prime}(x)-f^{\prime}(y)\right\Vert ^{2}$

$\text{(B3) \quad}f(y)\geq f(x)+\left\langle f^{\prime}(x),y-x\right\rangle +\frac{s}{2}\left\Vert x-y\right\Vert ^{2}$

$\text{(B4) \quad}\left\langle f^{\prime}(x)-f^{\prime}(y),x-y\right\rangle \geq\frac{1}{L}\left\Vert f^{\prime}(x)-f^{\prime}(y)\right\Vert ^{2}$

$\text{(B5) \quad}\left\langle f^{\prime}(x)-f^{\prime}(y),x-y\right\rangle \geq s\left\Vert x-y\right\Vert ^{2}$

We also use variants of B2 and B3 that are summed over each $f_{i}$,
with $x=\phi_{i}$ and $y=w$:
\[
f(w)\geq\frac{1}{n}\sum_{i}f_{i}(\phi_{i})+\frac{1}{n}\sum_{i}\left\langle f_{i}^{\prime}(\phi_{i}),w-\phi_{i}\right\rangle +\frac{1}{2Ln}\sum_{i}\left\Vert f^{\prime}(x)-f^{\prime}(y)\right\Vert ^{2}
\]

\[
f(w)\geq\frac{1}{n}\sum_{i}f_{i}(\phi_{i})+\frac{1}{n}\sum_{i}\left\langle f_{i}^{\prime}(\phi_{i}),w-\phi_{i}\right\rangle +\frac{s}{2n}\sum_{i}\left\Vert w-\phi_{i}\right\Vert ^{2}
\]

These are used in the following negated and rearranged form: 
\begin{eqnarray*}
-f(w)-T_{2} & = & -f(w)+\frac{1}{n}\sum_{i}f_{i}(\phi_{i})+\frac{1}{n}\sum_{i}\left\langle f_{i}^{\prime}(\phi_{i}),w-\phi_{i}\right\rangle \\
\text{(B6) \quad}\therefore-f(w)-T_{2} & \leq & -\frac{s}{2n}\sum_{i}\left\Vert w-\phi_{i}\right\Vert ^{2}
\end{eqnarray*}

\begin{eqnarray*}
\text{(B7) \quad}-f(w)-T_{2} & \leq & -\frac{1}{2Ln}\sum_{i}\left\Vert f^{\prime}(w)-f^{\prime}(\phi_{i})\right\Vert ^{2}.
\end{eqnarray*}

\section{Lyapunov term bounds}

Simplifying each Lyapunov term is fairly straight forward. We use
extensively that $\phi_{j}^{(k+1)}=w$, and that $\phi_{i}^{(k+1)}=\phi_{i}$
for $i\neq j$. Note also that 
\[
\text{(B8) \quad}w^{(k+1)}-w=\frac{1}{n}(w-\phi_{j})+\frac{1}{\alpha sn}\left[f_{j}^{\prime}(\phi_{j})-f_{j}^{\prime}(w)\right].
\]

\begin{lem}
Between steps $k$ and $k+1$, the $T_{1}=f(\bar{\phi})$ term changes
as follows:
\[
E[T_{1}^{(k+1)}]-T_{1}\leq\frac{1}{n}\left\langle f^{\prime}(\bar{\phi}),w-\bar{\phi}\right\rangle +\frac{L}{2n^{3}}\sum_{i}\left\Vert w-\phi_{i}\right\Vert ^{2}.
\]
\end{lem}
\begin{proof}
First we use the standard Lipschitz upper bound (B1):
\[
f(y)\leq f(x)+\left\langle f^{\prime}(x),y-x\right\rangle +\frac{L}{2}\left\Vert x-y\right\Vert ^{2}.
\]

We can apply this using $y=\bar{\phi}^{(k+1)}=\bar{\phi}+\frac{1}{n}(w-\phi_{j})$
and $x=\bar{\phi}$:

\begin{eqnarray*}
f(\bar{\phi}^{(k+1)}) & \leq & f(\bar{\phi})+\frac{1}{n}\left\langle f^{\prime}(\bar{\phi}),w-\phi_{j}\right\rangle +\frac{L}{2n^{2}}\left\Vert w-\phi_{j}\right\Vert ^{2}.
\end{eqnarray*}

We now take expectations over $j$, giving:
\[
E[f(\bar{\phi}^{(k+1)})]-f(\bar{\phi})\leq\frac{1}{n}\left\langle f^{\prime}(\bar{\phi}),w-\bar{\phi}\right\rangle +\frac{L}{2n^{3}}\sum_{i}\left\Vert w-\phi_{i}\right\Vert ^{2}.
\]
\end{proof}
\begin{lem}
Between steps $k$ and $k+1$, the $T_{2}=-\frac{1}{n}\sum_{i}f_{i}(\phi_{i})-\frac{1}{n}\sum_{i}\left\langle f_{i}^{\prime}(\phi_{i}),w-\phi_{i}\right\rangle $
term changes as follows:
\begin{eqnarray*}
E[T_{2}^{(k+1)}]-T_{2} & \leq & -\frac{1}{n}T_{2}-\frac{1}{n}f(w)\\
 & + & (\frac{1}{\alpha}-\frac{\beta}{n})\frac{1}{sn^{3}}\sum_{i}\left\Vert f_{i}^{\prime}(w)-f_{i}^{\prime}(\phi_{i})\right\Vert ^{2}\\
 & + & \frac{1}{n}\left\langle \bar{\phi}-w,f^{\prime}(w)\right\rangle -\frac{1}{n^{3}}\sum_{i}\left\langle f_{i}^{\prime}(w)-f_{i}^{\prime}(\phi_{i}),w-\phi_{i}\right\rangle .
\end{eqnarray*}
\end{lem}
\begin{proof}
We introduce the notation $T_{21}=-\frac{1}{n}\sum_{i}f_{i}(\phi_{i})$
and $T_{22}=-\frac{1}{n}\sum_{i}\left\langle f_{i}^{\prime}(\phi_{i}),w-\phi_{i}\right\rangle $.
We simplify the change in $T_{21}$ first using $\phi_{j}^{(k+1)}=w$:
\begin{eqnarray*}
T_{21}^{(k+1)}-T_{21} & = & -\frac{1}{n}\sum_{i}f_{i}(\phi_{i}^{(k+1)})+\frac{1}{n}\sum_{i}f_{i}(\phi_{i})\\
 & = & -\frac{1}{n}\sum_{i}f_{i}(\phi_{i})+\frac{1}{n}f_{j}(\phi_{j})-\frac{1}{n}f_{j}(w)+\frac{1}{n}\sum_{i}f_{i}(\phi_{i})\\
 & = & \frac{1}{n}f_{j}(\phi_{j})-\frac{1}{n}f_{j}(w)
\end{eqnarray*}

Now we simplify the change in $T_{22}$:

\[
T_{22}^{(k+1)}-T_{22}=-\frac{1}{n}\sum_{i}\left\langle f_{i}^{\prime}(\phi_{i}^{(k+1)}),w^{(k+1)}-w+w-\phi_{i}^{(k+1)}\right\rangle -T_{22}
\]
\begin{equation}
\therefore T_{22}^{(k+1)}-T_{22}=-\frac{1}{n}\sum_{i}\left\langle f_{i}^{\prime}(\phi_{i}^{(k+1)}),w-\phi_{i}^{(k+1)}\right\rangle -T_{22}-\frac{1}{n}\sum_{i}\left\langle f_{i}^{\prime}(\phi_{i}^{(k+1)}),w^{(k+1)}-w\right\rangle .\label{eq:t2-start}
\end{equation}

We now simplying the first two terms using $\phi_{j}^{(k+1)}=w$:
\begin{eqnarray*}
-\frac{1}{n}\sum_{i}\left\langle f_{i}^{\prime}(\phi_{i}^{(k+1)}),w-\phi_{i}^{(k+1)}\right\rangle -T_{22} & = & T_{22}-\frac{1}{n}\left\langle f_{j}^{\prime}(\phi_{j}),w-\phi_{j}\right\rangle +\frac{1}{n}\left\langle f_{j}^{\prime}(w),w-w\right\rangle -T_{22}\\
 & = & \frac{1}{n}\left\langle f_{j}^{\prime}(\phi_{j}),w-\phi_{j}\right\rangle .
\end{eqnarray*}
The last term of Equation \ref{eq:t2-start} expands further: 
\begin{eqnarray}
-\frac{1}{n}\sum_{i}\left\langle f_{i}^{\prime}(\phi_{i}^{(k+1)}),w^{(k+1)}-w\right\rangle  & = & -\frac{1}{n}\left\langle \sum_{i}f_{i}^{\prime}(\phi_{i})-f_{j}^{\prime}(\phi_{j})+f_{j}^{\prime}(w),w^{(k+1)}-w\right\rangle \nonumber \\
 & = & -\frac{1}{n}\left\langle \sum_{i}f_{i}^{\prime}(\phi_{i}),w^{(k+1)}-w\right\rangle -\frac{1}{n}\left\langle f_{j}^{\prime}(w)-f_{j}^{\prime}(\phi_{j}),w^{(k+1)}-w\right\rangle .\label{eq:t2-ip-exp}
\end{eqnarray}

The second inner product term in \ref{eq:t2-ip-exp} simplifies further
using B8:
\begin{eqnarray*}
-\frac{1}{n}\left\langle f_{j}^{\prime}(w)-f_{j}^{\prime}(\phi_{j}),w^{(k+1)}-w\right\rangle  & = & -\frac{1}{n}\left\langle f_{j}^{\prime}(w)-f_{j}^{\prime}(\phi_{j}),\frac{1}{n}(w-\phi_{j})+\frac{1}{\alpha sn}\left[f_{j}^{\prime}(\phi_{j})-f_{j}^{\prime}(w)\right]\right\rangle \\
 & = & -\frac{1}{n^{2}}\left\langle f_{j}^{\prime}(w)-f_{j}^{\prime}(\phi_{j}),w-\phi_{j}\right\rangle -\frac{1}{\alpha sn^{2}}\left\langle f_{j}^{\prime}(w)-f_{j}^{\prime}(\phi_{j}),f_{j}^{\prime}(\phi_{j})-f_{j}^{\prime}(w)\right\rangle .
\end{eqnarray*}

We simplify the second term: 
\begin{eqnarray*}
-\frac{1}{\alpha sn^{2}}\left\langle f_{j}^{\prime}(w)-f_{j}^{\prime}(\phi_{j}),f_{j}^{\prime}(\phi_{j})-f_{j}^{\prime}(w)\right\rangle  & = & \frac{1}{\alpha sn^{2}}\left\Vert f_{j}^{\prime}(w)-f_{j}^{\prime}(\phi_{j})\right\Vert ^{2}.
\end{eqnarray*}

Grouping all remaining terms gives:
\begin{eqnarray*}
T_{2}^{(k+1)}-T_{2} & \leq & \frac{1}{n}f_{j}(\phi_{j})+\frac{1}{n}\left\langle f_{j}^{\prime}(\phi_{j}),w-\phi_{j}\right\rangle -\frac{1}{n}f_{j}(w)\\
 & + & \frac{1}{\alpha sn^{2}}\left\Vert f_{j}^{\prime}(w)-f_{j}^{\prime}(\phi_{j})\right\Vert ^{2}-\frac{1}{n^{2}}\left\langle f_{j}^{\prime}(w)-f_{j}^{\prime}(\phi_{j}),w-\phi_{j}\right\rangle \\
 & - & \frac{1}{n}\left\langle \sum_{i}f_{i}^{\prime}(\phi_{i}),w^{(k+1)}-w\right\rangle .
\end{eqnarray*}

We now take expectations of each remaining term. For the bottom inner
product we use Lemma 1:
\begin{eqnarray*}
-\frac{1}{n}\left\langle \sum_{i}f_{i}^{\prime}(\phi_{i}),w^{(k+1)}-w\right\rangle  & = & \frac{1}{\alpha sn^{2}}\left\langle \sum_{i}f_{i}^{\prime}(\phi_{i}),f^{\prime}(w)\right\rangle \\
 & = & \frac{1}{n}\left\langle \bar{\phi}-w,f^{\prime}(w)\right\rangle .
\end{eqnarray*}

Taking expectations of the remaining terms is straight forward. We
get:
\begin{eqnarray*}
E[T_{2}^{(k+1)}]-T_{2} & \leq & \frac{1}{n^{2}}\sum_{i}f_{i}(\phi_{i})-\frac{1}{n}f(w)+\frac{1}{n^{2}}\sum_{i}\left\langle f_{i}^{\prime}(\phi_{i}),w-\phi_{i}\right\rangle \\
 & + & \frac{1}{\alpha sn^{3}}\sum_{i}\left\Vert f_{i}^{\prime}(w)-f_{i}^{\prime}(\phi_{i})\right\Vert ^{2}-\frac{1}{n^{3}}\sum_{i}\left\langle f_{i}^{\prime}(w)-f_{i}^{\prime}(\phi_{i}),w-\phi_{i}\right\rangle \\
 & + & \frac{1}{n}\left\langle \bar{\phi}-w,f^{\prime}(w)\right\rangle .
\end{eqnarray*}
\end{proof}
\begin{lem}
Between steps $k$ and $k+1$, the $T_{3}=-\frac{s}{2n}\sum_{i}\left\Vert w-\phi_{i}\right\Vert ^{2}$
term changes as follows:
\begin{eqnarray*}
E[T_{3}^{(k+1)}]-T_{3} & = & -(1+\frac{1}{n})\frac{1}{n}T_{3}\\
 & + & \frac{1}{\alpha n}\left\langle f^{\prime}(w),w-\bar{\phi}\right\rangle -\frac{1}{2\alpha^{2}sn^{3}}\sum_{i}\left\Vert f_{i}^{\prime}(\phi_{i})-f_{i}^{\prime}(w)\right\Vert ^{2}.
\end{eqnarray*}
\end{lem}
\begin{proof}
We expand as:
\begin{eqnarray}
T_{3}^{(k+1)} & = & -\frac{s}{2n}\sum_{i}\left\Vert w^{(k+1)}-\phi_{i}^{(k+1)}\right\Vert ^{2}\nonumber \\
 & = & -\frac{s}{2n}\sum_{i}\left\Vert w^{(k+1)}-w+w-\phi_{i}^{(k+1)}\right\Vert ^{2}\\
 & = & -\frac{s}{2}\left\Vert w^{(k+1)}-w\right\Vert ^{2}-\frac{s}{2n}\sum_{i}\left\Vert w-\phi_{i}^{(k+1)}\right\Vert ^{2}-\frac{s}{n}\sum_{i}\left\langle w^{(k+1)}-w,w-\phi_{i}^{(k+1)}\right\rangle .\label{eq:t3-initial}
\end{eqnarray}

We expand the three terms on the right separately. For the first term:
\begin{eqnarray}
-\frac{s}{2}\left\Vert w^{(k+1)}-w\right\Vert ^{2} & = & -\frac{s}{2}\left\Vert \frac{1}{n}(w-\phi_{j})+\frac{1}{\alpha sn}\left(f_{j}(\phi_{j})-f_{j}(w)\right)\right\Vert ^{2}\nonumber \\
 & =- & \frac{s}{2n^{2}}\left\Vert w-\phi_{j}\right\Vert ^{2}-\frac{1}{2\alpha^{2}sn^{2}}\left\Vert f_{j}(\phi_{j})-f_{j}(w)\right\Vert ^{2}\nonumber \\
 & - & \frac{1}{\alpha n^{2}}\left\langle f_{j}(\phi_{j})-f_{j}(w),w-\phi_{j}\right\rangle .\label{eq:wkp-minus-w-expansion}
\end{eqnarray}

For the second term of Equation \ref{eq:t3-initial}, using $\phi_{j}^{(k+1)}=w$:
\begin{eqnarray*}
-\frac{s}{2n}\sum_{i}\left\Vert w-\phi_{i}^{(k+1)}\right\Vert ^{2} & = & -\frac{s}{2n}\sum_{i}\left\Vert w-\phi_{i}\right\Vert ^{2}+\frac{s}{2n}\left\Vert w-\phi_{j}\right\Vert ^{2}\\
 & = & T_{3}+\frac{s}{2n}\left\Vert w-\phi_{j}\right\Vert ^{2}.
\end{eqnarray*}

For the third term of Equation \ref{eq:t3-initial}:
\begin{eqnarray}
-\frac{s}{n}\sum_{i}\left\langle w^{(k+1)}-w,w-\phi_{i}^{(k+1)}\right\rangle  & = & -\frac{s}{n}\sum_{i}\left\langle w^{(k+1)}-w,w-\phi_{i}\right\rangle +\frac{s}{n}\left\langle w^{(k+1)}-w,w-\phi_{j}\right\rangle \nonumber \\
 & = & -s\left\langle w^{(k+1)}-w,w-\frac{1}{n}\sum_{i}\phi_{i}\right\rangle +\frac{s}{n}\left\langle w^{(k+1)}-w,w-\phi_{j}\right\rangle .\label{eq:ip-terms-t3}
\end{eqnarray}

The second inner product term in Equation \ref{eq:ip-terms-t3} becomes
(using B8):
\begin{eqnarray*}
\frac{s}{n}\left\langle w^{(k+1)}-w,w-\phi_{j}\right\rangle  & = & \frac{s}{n}\left\langle \frac{1}{n}(w-\phi_{j})+\frac{1}{\alpha sn}\left[f_{j}^{\prime}(\phi_{j})-f_{j}^{\prime}(w)\right],w-\phi_{j}\right\rangle \\
 & = & \frac{s}{n^{2}}\left\Vert w-\phi_{j}\right\Vert ^{2}+\frac{1}{\alpha n^{2}}\left\langle f_{j}^{\prime}(\phi_{j})-f_{j}^{\prime}(w),w-\phi_{j}\right\rangle .
\end{eqnarray*}

Notice that the inner product term here cancels with the one in \ref{eq:wkp-minus-w-expansion}.

Now we can take expectations of each remaining term. Recall that $E[w^{(k+1)}]-w=-\frac{1}{\alpha sn}f^{\prime}(w)$,
so the first inner product term in \ref{eq:ip-terms-t3} becomes:
\[
-sE\left[\left\langle w^{(k+1)}-w,w-\frac{1}{n}\sum_{i}\phi_{i}\right\rangle \right]=\frac{1}{\alpha n}\left\langle f^{\prime}(w),w-\bar{\phi}\right\rangle .
\]

All other terms don't simplify under expectations. So the result is:
\begin{eqnarray*}
E[T_{3}^{(k+1)}]-T_{3} & = & (\frac{1}{2}-\frac{1}{n})\frac{s}{n^{2}}\sum_{i}\left\Vert w-\phi_{i}\right\Vert ^{2}\\
 & + & \frac{1}{\alpha n}\left\langle f^{\prime}(w),w-\bar{\phi}\right\rangle -\frac{1}{2\alpha^{2}sn^{3}}\sum_{i}\left\Vert f_{i}(\phi_{i})-f_{i}(w)\right\Vert ^{2}.
\end{eqnarray*}
\end{proof}
\begin{lem}
Between steps $k$ and $k+1$, the $T_{4}=\frac{s}{2n}\sum_{i}\left\Vert \bar{\phi}-\phi_{i}\right\Vert ^{2}$
term changes as follows:

\[
E[T_{4}^{(k+1)}]-T_{4}=-\frac{s}{2n^{2}}\sum_{i}\left\Vert \bar{\phi}-\phi_{i}\right\Vert ^{2}+\frac{s}{2n}\left\Vert \bar{\phi}-w\right\Vert ^{2}-\frac{s}{2n^{3}}\sum_{i}\left\Vert w-\phi_{i}\right\Vert ^{2}.
\]
\end{lem}
\begin{proof}
Note that $\bar{\phi}^{(k+1)}-\bar{\phi}=\frac{1}{n}(w-\phi_{j})$,
so:
\begin{eqnarray*}
T_{4}^{(k+1)} & = & \frac{s}{2n}\sum_{i}\left\Vert \bar{\phi}^{(k+1)}-\bar{\phi}+\bar{\phi}-\phi_{i}^{(k+1)}\right\Vert ^{2}\\
 & = & \frac{s}{2n}\sum_{i}\left(\left\Vert \bar{\phi}^{(k+1)}-\bar{\phi}\right\Vert ^{2}+\left\Vert \bar{\phi}-\phi_{i}^{(k+1)}\right\Vert ^{2}+2\left\langle \bar{\phi}^{(k+1)}-\bar{\phi},\bar{\phi}-\phi_{i}^{(k+1)}\right\rangle \right)\\
 & = & \frac{s}{2n}\sum_{i}\left(\left\Vert \frac{1}{n}(w-\phi_{j})\right\Vert ^{2}+\left\Vert \bar{\phi}-\phi_{i}^{(k+1)}\right\Vert ^{2}+\frac{2}{n}\left\langle w-\phi_{j},\bar{\phi}-\phi_{i}^{(k+1)}\right\rangle \right).
\end{eqnarray*}
Now using $\frac{1}{n}\sum_{i}\left(\bar{\phi}-\phi_{i}^{(k+1)}\right)=\bar{\phi}-\bar{\phi}^{(k+1)}=-\frac{1}{n}(w-\phi_{j})$
to simplify the inner product term:
\begin{eqnarray}
 & = & \frac{s}{2n^{2}}\left\Vert w-\phi_{j}\right\Vert ^{2}+\frac{s}{2n}\sum_{i}\left\Vert \bar{\phi}-\phi_{i}^{(k+1)}\right\Vert ^{2}+\frac{s}{n^{2}}\left\langle w-\phi_{j},\phi_{j}-w\right\rangle \nonumber \\
 & = & \frac{s}{2n^{2}}\left\Vert w-\phi_{j}\right\Vert ^{2}+\frac{s}{2n}\sum_{i}\left\Vert \bar{\phi}-\phi_{i}^{(k+1)}\right\Vert ^{2}-\frac{s}{n}\left\Vert w-\phi_{j}\right\Vert ^{2}\nonumber \\
 & = & \frac{s}{2n}\sum_{i}\left\Vert \bar{\phi}-\phi_{i}^{(k+1)}\right\Vert ^{2}-\frac{s}{2n}\left\Vert w-\phi_{j}\right\Vert ^{2}\nonumber \\
 & = & \frac{s}{2n}\sum_{i}\left\Vert \bar{\phi}-\phi_{i}\right\Vert ^{2}-\frac{s}{2n}\left\Vert \bar{\phi}-\phi_{j}\right\Vert ^{2}+\frac{s}{2n}\left\Vert \bar{\phi}-w\right\Vert ^{2}-\frac{s}{2n^{2}}\left\Vert w-\phi_{j}\right\Vert ^{2}.\label{eq:right-term}
\end{eqnarray}

Taking expectations gives the result.\end{proof}
\begin{lem}
\label{thm:strong-lb}Let $f\in S_{s,L}$. Then we have:
\[
f(x)\geq f(y)+\left\langle f^{\prime}(y),x-y\right\rangle +\frac{1}{2\left(L-s\right)}\left\Vert f^{\prime}(x)-f^{\prime}(y)\right\Vert ^{2}+\frac{sL}{2\left(L-s\right)}\left\Vert y-x\right\Vert ^{2}+\frac{s}{\left(L-s\right)}\left\langle f^{\prime}(x)-f^{\prime}(y),y-x\right\rangle .
\]
\end{lem}
\begin{proof}
Define the function $g$ as $g(x)=f(x)-\frac{s}{2}\left\Vert x\right\Vert ^{2}$.
Then the gradient is $g^{\prime}(x)=f^{\prime}(x)-sx$. $g$ has a
lipschitz gradient with with constant $L-s$. By convexity we have:
\[
g(x)\geq g(y)+\left\langle g^{\prime}(y),x-y\right\rangle +\frac{1}{2(L-s)}\left\Vert g^{\prime}(x)-g^{\prime}(y)\right\Vert ^{2}.
\]

Now replacing $g$ with $f$:

\[
f(x)-\frac{s}{2}\left\Vert x\right\Vert ^{2}\geq f(y)-\frac{s}{2}\left\Vert y\right\Vert ^{2}+\left\langle f^{\prime}(y)-sy,x-y\right\rangle +\frac{1}{2\left(L-s\right)}\left\Vert f^{\prime}(x)-sx-f^{\prime}(y)+sy\right\Vert ^{2}.
\]

Note that

\begin{eqnarray*}
\frac{1}{2\left(L-s\right)}\left\Vert f^{\prime}(x)-sx-f^{\prime}(y)+sy\right\Vert ^{2} & = & \frac{1}{2\left(L-s\right)}\left\Vert f^{\prime}(x)-f^{\prime}(y)\right\Vert ^{2}+\frac{s^{2}}{2\left(L-s\right)}\left\Vert y-x\right\Vert ^{2}\\
 &  & \frac{s}{\left(L-s\right)}\left\langle f^{\prime}(x)-f^{\prime}(y),y-x\right\rangle ,
\end{eqnarray*}

so:

\begin{eqnarray*}
f(x) & \geq & f(y)+\left\langle f^{\prime}(y),x-y\right\rangle +\frac{1}{2\left(L-s\right)}\left\Vert f^{\prime}(x)-f^{\prime}(y)\right\Vert ^{2}+\frac{s^{2}}{2\left(L-s\right)}\left\Vert y-x\right\Vert ^{2}\\
 &  & +\frac{s}{2}\left\Vert x\right\Vert ^{2}-\frac{s}{2}\left\Vert y\right\Vert ^{2}+\frac{s}{\left(L-s\right)}\left\langle f^{\prime}(x)-f^{\prime}(y),y-x\right\rangle -s\left\langle y,x-y\right\rangle .
\end{eqnarray*}

Now using:

\[
\frac{s}{2}\left\Vert x\right\Vert ^{2}-s\left\langle y,x\right\rangle =-\frac{s}{2}\left\Vert y\right\Vert ^{2}+\frac{s}{2}\left\Vert x-y\right\Vert ^{2},
\]

we get:

\begin{eqnarray*}
f(x) & \geq & f(y)+\left\langle f^{\prime}(y),x-y\right\rangle +\frac{1}{2\left(L-s\right)}\left\Vert f^{\prime}(x)-f^{\prime}(y)\right\Vert ^{2}+\frac{s^{2}}{2\left(L-s\right)}\left\Vert x-y\right\Vert ^{2}\\
 &  & -s\left\Vert y\right\Vert ^{2}+\frac{s}{2}\left\Vert x-y\right\Vert ^{2}+\frac{s}{\left(L-s\right)}\left\langle f^{\prime}(x)-f^{\prime}(y),y-x\right\rangle +s\left\langle y,y\right\rangle 
\end{eqnarray*}

Note the norm $y$ terms cancel, and:
\begin{eqnarray*}
\frac{s}{2}\left\Vert x-y\right\Vert ^{2}+\frac{s^{2}}{2\left(L-s\right)}\left\Vert x-y\right\Vert ^{2} & = & \frac{(L-s)s+s^{2}}{2\left(L-s\right)}\left\Vert x-y\right\Vert ^{2}\\
 & = & \frac{sL}{2\left(L-s\right)}\left\Vert x-y\right\Vert ^{2}.
\end{eqnarray*}

So:

\begin{eqnarray*}
f(x) & \geq & f(y)+\left\langle f^{\prime}(y),x-y\right\rangle +\frac{1}{2\left(L-s\right)}\left\Vert f^{\prime}(x)-f^{\prime}(y)\right\Vert ^{2}+\frac{sL}{2\left(L-s\right)}\left\Vert y-x\right\Vert ^{2}\\
 &  & +\frac{s}{\left(L-s\right)}\left\langle f^{\prime}(x)-f^{\prime}(y),y-x\right\rangle 
\end{eqnarray*}
\end{proof}
\begin{cor}
\label{cor:constant-corr}Take $f(x)=\frac{1}{n}\sum_{i}f_{i}(x)$,
with the big data condition holding with constant $\beta$. Then for
any $x$ and $\phi_{i}$ vectors: 
\begin{eqnarray*}
f(x) & \geq & \frac{1}{n}\sum_{i}f_{i}(\phi_{i})+\frac{1}{n}\sum_{i}\left\langle f_{i}^{\prime}(\phi_{i}),x-\phi_{i}\right\rangle +\frac{\beta}{2sn^{2}}\sum_{i}\left\Vert f_{i}^{\prime}(x)-f_{i}^{\prime}(\phi_{i})\right\Vert ^{2}\\
 &  & +\frac{\beta L}{2n^{2}}\sum_{i}\left\Vert x-\phi_{i}\right\Vert ^{2}+\frac{\beta}{n^{2}}\sum_{i}\left\langle f_{i}^{\prime}(x)-f_{i}^{\prime}(\phi_{i}),\phi_{i}-x\right\rangle .
\end{eqnarray*}
\end{cor}
\begin{proof}
We apply Lemma \ref{thm:strong-lb} to each $f_{i}$ , but instead
of using the actual constant $L$, we use $\frac{sn}{\beta}+s$, which
under the big data assumption is larger than $L$:

\[
f_{i}(x)\geq f_{i}(\phi_{i})+\left\langle f_{i}^{\prime}(\phi_{i}),x-\phi_{i}\right\rangle +\frac{\beta}{2sn}\left\Vert f_{i}^{\prime}(x)-f_{i}^{\prime}(\phi_{i})\right\Vert ^{2}+\frac{\beta L}{2n}\left\Vert x-\phi_{i}\right\Vert ^{2}+\frac{\beta}{n}\left\langle f_{i}^{\prime}(x)-f_{i}^{\prime}(\phi_{i}),\phi_{i}-x\right\rangle .
\]

Averaging over $i$ gives the result.
\end{proof}

\section{Lower complexity bounds}

In this section we use the following technical assumption, as used
in Nesterov (1998):

\textit{Assumption 1: An optimization method at step $k$ may only
invoke the oracle with a point $x^{(k)}$ that is of the form:
\[
x^{(k)}=x^{(0)}+\sum_{i}a_{i}g^{(i)},
\]
}\emph{where $g^{(i)}$ is the derivative returned by the oracle at
step $i$, and $a_{i}\in R$. }

This assumption prevents an optimization method from just guessing
the correct solution without doing any work. Virtually all optimization
methods fall into under this assumption.

\subsection*{Simple $(1-\frac{1}{n})^{k}$ bound}

Any procedure that minimizes a sum of the form $f(w)=\frac{1}{n}\sum_{i}f_{i}(w)$
by uniform random access of $f_{i}$ is restricted by the requirement
that it has to actually see each term at least once in order to find
the minimum. This leads to a $\left(1-\frac{1}{n}\right)^{k}$ rate
in expectation. We now formalize such an argument. We will work in
$R^{n}$, matching the dimensionality of the problem to the number
of terms in the summation.
\begin{thm}
For any $f\in{FS}_{1,n,n}^{1,1}(R^{n})$, we have that a $k$ step
optimization procedure gives:
\[
E[f(w)]-f(w^{*})\geq\left(1-\frac{1}{n}\right)^{k}\left(f(w^{(0)})-f(w^{*}\right))
\]
\end{thm}
\begin{proof}
We will exhibit a simple worst-case problem. Without loss of generality
we assume that the first oracle access by the optimization procedure
is at $w=0$. In any other case, we shift our space in the following
argument approprately. 

Let $f(w)=\frac{1}{n}\sum_{i}\left[\frac{n}{2}\left(w_{i}-1\right)^{2}+\frac{1}{2}\left\Vert w\right\Vert ^{2}\right]$.
Then clearly the solution is $w_{i}=\frac{1}{2}$ for each $i$, with
minimum of $f(w^{*})=\frac{n}{4}$. For $w=0$ we have $f(0)=\frac{n}{2}$.
Since the derivative of each $f_{j}$ is $0$ on the $i$th component
if we have not yet seen $f_{i}$, the value of each $w_{i}$ remains
$0$ unless term $i$ has been seen. 

Let $v^{(k)}$ be the number of unique terms we have not seen up to
step $k$. Between steps $k$ and $k+1$, $v$ decreases by $1$ with
probably $\frac{v}{n}$ and stays the same otherwise. So 
\[
E[v^{(k+1)}|v^{(k)}]=v^{(k)}-\frac{v^{(k)}}{n}=\left(1-\frac{1}{n}\right)v^{(k)}.
\]

So we may define the sequence $X^{(k)}=\left(1-\frac{1}{n}\right)^{-k}v^{(k)}$,
which is then martingale with respect to $v$, as
\begin{eqnarray*}
E[X^{(k+1)}|v^{(k)}] & = & \left(1-\frac{1}{n}\right)^{-k-1}E[v^{(k+1)}|v^{(k)}]\\
 & = & \left(1-\frac{1}{n}\right)^{-k}v^{(k)}\\
 & = & X^{(k)}.
\end{eqnarray*}

Now since $k$ is chosen in advance, stopping time theory gives that
$E[X^{(k)}]=E[X^{(0)}]$. So 
\[
E[\left(1-\frac{1}{n}\right)^{-k}v^{(k)}]=n,
\]
\[
\therefore E[v^{(k)}]=\left(1-\frac{1}{n}\right)^{k}n.
\]

By Assumption 1, the function can be at most minimized over the dimensions
seen up to step $k$. The seen dimensions contribute a value of $\frac{1}{4}$
and the unseen terms $\frac{1}{2}$ to the function. So we have that:
\begin{eqnarray*}
E[f(w^{(k)})]-f(w^{*}) & \geq & \frac{1}{4}\left(n-E[v^{(k)}]\right)+\frac{1}{2}E[v^{(k)}]-\frac{n}{4}\\
 & = & \frac{1}{4}E[v^{(k)}]\\
 & = & \left(1-\frac{1}{n}\right)^{k}\frac{n}{4}\\
 & = & \left(1-\frac{1}{n}\right)^{k}\left[f(w^{(0)})-f(w^{*})\right].
\end{eqnarray*}

\end{proof}

\subsection*{Minimization of non-strongly convex finite sums}

It is known that the class of convex, continuous \& differentiable
problems, with $L-$Lipschitz continuous derivatives $F_{L}^{1,1}(R^{m})$
, has the following lower complexity bound when $k<m$:
\[
f(x^{(k)})-f^{(k)}(x^{*})\geq\frac{L\left\Vert x^{(0)}-x^{*}\right\Vert ^{2}}{8(k+1)^{2}},
\]

which is proved via explicit construction of a worst-case function
where it holds with equality. Let this worst case function be denoted
$h^{(k)}$ at step $k$.

We will show that the same bound applies for the finite-sum case,
on a per pass equivalent basis, by a simple construction. 
\begin{thm}
The following lower bound holds for $k$ a multiple of $n$:
\end{thm}
\[
f(x^{(k)})-f^{(k)}(x^{*})\geq\frac{L\left\Vert x^{(0)}-x^{*}\right\Vert ^{2}}{8(\frac{k}{n}+1)^{2}},
\]

when $f$ is a finite sum of $n$ terms $f(x)=\frac{1}{n}\sum_{i}f_{i}(x)$,
with each $f_{i}\in F_{L}^{1,1}(R^{m})$, and with $m>kn$, under
the oracle model where the optimization method may choose the index
$i$ to access at each step.
\begin{proof}
Let $h_{i}$ be a copy of $h^{(k)}$ redefined to be on the subset
of dimensions $i+jn$, for $j=1\dots k$, or in other words, $h_{i}^{(k)}(x)=h^{(k)}([x_{i},x_{i+n},\dots x_{i+jn},\dots])$.
Then we will use:
\[
f^{(k)}(x)=\frac{1}{n}\sum_{i}h_{i}^{(k)}(x)
\]

as a worst case function for step $k$. 

Since the derivatives are orthogonal between $h_{i}$ and $h_{j}$
for $i\neq j$, by Assumption 1, the bound on $h_{i}^{(k)}(x^{(k)})-h_{i}^{(k)}(x^{*})$
depends only on the number of times the oracle has been invoked with
index $i$, for each $i$. Let this be denoted $c_{i}$. Then we have
that:
\[
f(x^{(k)})-f^{(k)}(x^{*})\geq\frac{L}{8n}\sum_{i}\frac{\left\Vert x^{(0)}-x^{*}\right\Vert _{(i)}^{2}}{(c_{i}+1)^{2}}.
\]

Where $\left\Vert \cdot\right\Vert _{(i)}^{2}$ is the norm on the
dimensions $i+jn$ for $j=1\dots k$. We can combine these norms into
a regular Euclidean norm:
\[
f(x^{(k)})-f^{(k)}(x^{*})\geq\frac{L\left\Vert x^{(0)}-x^{*}\right\Vert ^{2}}{8n}\sum_{i}\frac{1}{(c_{i}+1)^{2}}.
\]

Now notice that $\sum_{i}\frac{1}{(c_{i}+1)^{2}}$ under the constraint
$\sum c_{i}=k$ is minimized when each $c_{i}=\frac{k}{n}$. So we
have:

\begin{eqnarray*}
f(x^{(k)})-f^{(k)}(x^{*}) & \geq & \frac{L\left\Vert x^{(0)}-x^{*}\right\Vert ^{2}}{8n}\sum_{i}\frac{1}{(\frac{k}{n}+1)^{2}},\\
 &  & \frac{L\left\Vert x^{(0)}-x^{*}\right\Vert ^{2}}{8(\frac{k}{n}+1)^{2}},
\end{eqnarray*}

which is the same lower bound as for $k/n$ iterations of an optimization
method on $f$ directly.\end{proof}